\documentclass[11pt]{article}

\usepackage{natbib}
\usepackage[utf8]{inputenc} 
\usepackage[T1]{fontenc}    
\usepackage{hyperref}       
\usepackage{url}            
\usepackage{booktabs}       
\usepackage{amsfonts}       
\usepackage{nicefrac}       
\usepackage{microtype}      
\usepackage{xcolor}         
\usepackage{graphicx}
\usepackage{amsmath, amssymb, amsthm}
\usepackage{mathtools}
\usepackage{algorithm}
\usepackage{algorithmic}
\usepackage{subcaption}
\usepackage{thmtools}

\usepackage{fullpage}
\usepackage{authblk}
\usepackage{tikz}
\usepackage{bbding}
\usepackage{thm-restate}

\newtheorem{property}{Property}
\newtheorem{assumption}{Assumption}

\usepackage{tikz}
\usetikzlibrary{
    decorations.pathmorphing, 
    arrows.meta,             
    positioning,             
    calc
}
\usetikzlibrary{shapes, arrows.meta, positioning, fit, backgrounds, decorations.pathreplacing, calc, shadows}

\definecolor{patientblue}{RGB}{70, 130, 180}
\definecolor{donorred}{RGB}{220, 80, 80}
\definecolor{clustergray}{RGB}{240, 240, 245}
\definecolor{edgegreen}{RGB}{46, 139, 87}

\tikzset{
    complexheart/.pic={
        \begin{scope}[yscale=-1, shift={(-4.5, -9)}]
            \draw[fill=red!30!white](.456,3.236)
                .. controls (.422,4.168) and (.408,5.095) .. (.461,6.046)
                .. controls (.475,6.228) and (.365,6.400) .. (.379,6.601)
                -- (.819,11.816)
                .. controls (.843,12.194) and (.838,13.389) .. (.900,13.972)
                .. controls (.943,14.340) and (2.870,14.340) .. (2.903,13.972)
                .. controls (2.927,13.699) and (2.932,13.436) .. (2.903,13.169)
                -- (1.847,5.401)
                .. controls (1.914,4.627) and (2.033,3.924) .. (2.129,3.193)
                .. controls (2.177,2.906) and (.441,2.863) .. (.456,3.236) ;
            \draw[fill=red!30!white] (.456,3.236)
                .. controls (.422,4.168) and (.408,5.095) .. (.461,6.046)
                .. controls (.475,6.228) and (.365,6.400) .. (.379,6.601)
                -- (.819,11.816)
                .. controls (.843,12.194) and (.838,13.389) .. (.900,13.972)
                .. controls (.943,14.340) and (2.870,14.340) .. (2.903,13.972)
                .. controls (2.927,13.699) and (2.932,13.436) .. (2.903,13.169)
                -- (1.847,5.401)
                .. controls (1.914,4.627) and (2.033,3.924) .. (2.129,3.193)
                .. controls (2.177,2.906) and (.441,2.863) .. (.456,3.236) ;
            \draw[fill=red!60!white] (.303,6.697)
                .. controls (-.161,7.442) and (-.022,9.641) .. (.484,11.118)
                .. controls (.795,12.022) and (1.297,12.414) .. (2.062,12.801)
                .. controls (3.429,13.513) and (5.112,13.714) .. (6.598,13.795)
                .. controls (9.495,13.957) and (9.738,11.735) .. (8.974,8.967)
                .. controls (8.744,8.126) and (8.123,7.815) .. (7.917,7.222)
                .. controls (7.683,6.606) and (7.310,5.999) .. (6.598,5.420)
                .. controls (5.570,4.579) and (1.230,5.210) .. (.303,6.697) ;
            \fill[fill=red!25!white] (2.540,6.405)
                .. controls (2.296,6.257) and (1.890,6.300) .. (1.608,6.424)
                .. controls (1.274,6.577) and (1.101,6.802) .. (1.001,7.151)
                .. controls (.858,7.652) and (1.025,8.293) .. (1.106,8.795)
                .. controls (1.149,9.072) and (1.149,9.402) .. (1.254,9.684)
                .. controls (1.278,9.746) and (1.278,9.823) .. (1.360,9.870)
                .. controls (1.426,9.909) and (1.412,9.732) .. (1.460,9.651)
                .. controls (1.747,9.144) and (2.210,8.346) .. (2.502,7.834)
                .. controls (2.602,7.657) and (2.631,7.437) .. (2.531,7.303)
                .. controls (2.411,7.146) and (2.206,6.883) .. (2.306,6.768)
                .. controls (2.382,6.682) and (2.621,6.453) .. (2.540,6.405) ;
            \fill[fill=red!25!white] (4.204,6.606)
                .. controls (5.413,6.434) and (5.327,7.543) .. (5.466,7.796)
                .. controls (5.599,8.054) and (6.350,8.785) .. (6.498,9.029)
                .. controls (6.646,9.273) and (7.282,9.947) .. (7.067,10.616)
                .. controls (6.847,11.290) and (5.776,11.582) .. (5.212,11.429)
                .. controls (4.643,11.271) and (3.773,10.941) .. (3.166,11.113)
                .. controls (2.559,11.280) and (1.350,10.793) .. (1.790,10.052)
                .. controls (2.225,9.311) and (3.640,6.692) .. (4.204,6.606) ;
            \draw[fill=yellow] (1.307,3.107)
                .. controls (1.675,3.107) and (1.976,3.169) .. (1.976,3.250)
                .. controls (1.976,3.327) and (1.675,3.394) .. (1.307,3.394)
                .. controls (.934,3.394) and (.633,3.327) .. (.633,3.250)
                .. controls (.633,3.169) and (.934,3.107) .. (1.307,3.107) ;
            \draw[fill=blue!20!red!80!white] (1.307,3.107)
                .. controls (1.675,3.107) and (1.976,3.169) .. (1.976,3.250)
                .. controls (1.976,3.327) and (1.675,3.394) .. (1.307,3.394)
                .. controls (.934,3.394) and (.633,3.327) .. (.633,3.250)
                .. controls (.633,3.169) and (.934,3.107) .. (1.307,3.107) ;
            \fill[fill=red!10!white] (1.024,14.000)
                .. controls (.991,13.929) and (.924,12.146) .. (1.039,12.323)
                .. controls (1.120,12.452) and (1.498,12.753) .. (1.870,12.767)
                .. controls (2.004,12.777) and (1.918,14.129) .. (1.875,14.163)
                .. controls (1.789,14.230) and (1.110,14.173) .. (1.024,14.000) ;
            \fill[fill=red!10!white] (.618,3.432)
                -- (.618,6.247)
                .. controls (.852,5.941) and (1.086,5.745) .. (1.345,5.578)
                .. controls (1.364,5.033) and (1.368,4.125) .. (1.345,3.556)
                .. controls (1.096,3.556) and (.862,3.518) .. (.618,3.432) ;
            \draw[fill=white] (4.872,5.215)
                .. controls (4.944,5.607) and (4.891,5.884) .. (4.595,6.061)
                .. controls (4.389,6.185) and (3.749,6.013) .. (3.864,5.664)
                .. controls (3.974,5.339) and (4.088,5.081) .. (4.346,4.923)
                .. controls (4.533,4.813) and (4.858,5.119) .. (4.872,5.215) ;
            
            \draw[fill=red!80!blue] (5.662,5.665)
                .. controls (5.643,5.732) and (5.624,5.803) .. (5.604,5.880)
                .. controls (5.944,6.076) and (6.355,6.640) .. (6.522,7.314)
                .. controls (6.699,8.031) and (6.336,8.284) .. (5.853,8.796)
                .. controls (5.523,9.154) and (5.313,9.541) .. (5.303,10.014)
                .. controls (5.485,9.106) and (5.844,8.958) .. (6.030,8.877)
                .. controls (6.278,8.767) and (6.221,9.221) .. (6.159,10.029)
                .. controls (6.341,9.412) and (6.298,8.944) .. (6.274,8.752)
                .. controls (6.245,8.561) and (6.589,8.370) .. (6.685,8.093)
                .. controls (6.799,7.777) and (6.924,8.069) .. (6.991,8.208)
                .. controls (7.153,8.547) and (7.311,8.681) .. (7.263,8.987)
                .. controls (7.096,10.033) and (6.780,10.387) .. (6.250,10.650)
                .. controls (6.728,10.492) and (6.833,10.373) .. (6.976,10.230)
                .. controls (6.991,10.440) and (6.900,10.870) .. (6.962,11.209)
                .. controls (6.957,10.669) and (7.277,9.197) .. (7.469,9.044)
                .. controls (7.579,8.953) and (8.296,9.632) .. (8.372,10.139)
                .. controls (8.458,10.698) and (8.936,11.535) .. (8.716,12.213)
                .. controls (8.970,11.458) and (8.535,10.636) .. (8.635,10.445)
                .. controls (8.778,10.641) and (9.060,11.037) .. (9.027,11.257)
                .. controls (8.994,11.477) and (9.170,11.874) .. (9.094,12.333)
                .. controls (9.228,11.883) and (9.065,11.702) .. (9.118,11.377)
                .. controls (9.166,11.052) and (8.979,10.626) .. (8.836,10.359)
                .. controls (8.549,9.828) and (7.889,9.187) .. (7.373,8.595)
                .. controls (7.708,8.796) and (8.535,9.077) .. (8.487,9.369)
                .. controls (8.434,9.661) and (8.860,9.900) .. (9.027,10.306)
                .. controls (8.903,9.928) and (8.587,9.656) .. (8.568,9.321)
                .. controls (8.554,8.982) and (8.023,8.815) .. (7.889,8.762)
                .. controls (7.995,8.752) and (8.253,8.748) .. (8.348,8.834)
                .. controls (8.444,8.920) and (8.702,9.097) .. (8.912,9.044)
                .. controls (8.539,9.044) and (8.463,8.671) .. (8.286,8.657)
                .. controls (8.109,8.647) and (7.641,8.566) .. (7.531,8.470)
                .. controls (7.015,8.016) and (6.599,7.223) .. (6.819,7.271)
                .. controls (7.344,7.381) and (7.899,7.471) .. (8.377,7.988)
                .. controls (8.023,7.567) and (7.999,7.562) .. (7.478,7.314)
                .. controls (7.091,7.137) and (6.537,6.979) .. (6.355,6.511)
                .. controls (6.250,6.229) and (5.887,5.789) .. (5.662,5.665) ;
            \draw[fill=red!80!blue] (3.387,5.951)
                .. controls (3.779,7.003) and (2.665,8.600) .. (1.857,9.866)
                .. controls (1.384,10.607) and (1.704,11.157) .. (2.364,11.305)
                .. controls (2.842,11.410) and (4.515,11.004) .. (4.993,11.023)
                .. controls (5.480,11.037) and (5.604,10.856) .. (5.882,10.559)
                .. controls (5.815,10.894) and (5.256,11.305) .. (4.997,11.209)
                .. controls (4.735,11.114) and (4.199,11.229) .. (3.975,11.338)
                .. controls (4.280,11.425) and (4.787,11.401) .. (5.179,11.601)
                .. controls (5.571,11.802) and (5.920,11.750) .. (6.159,11.300)
                .. controls (6.140,11.730) and (5.538,11.955) .. (5.126,11.783)
                .. controls (4.715,11.616) and (3.736,11.434) .. (3.401,11.434)
                .. controls (3.176,11.429) and (3.009,11.468) .. (2.617,11.563)
                .. controls (3.324,11.635) and (4.252,12.084) .. (5.165,12.304)
                .. controls (5.863,12.471) and (6.680,12.452) .. (6.790,12.189)
                .. controls (6.895,11.946) and (7.072,11.840) .. (7.473,11.793)
                .. controls (7.000,12.008) and (6.866,12.156) .. (6.814,12.438)
                .. controls (7.015,12.490) and (7.411,12.256) .. (7.626,12.457)
                .. controls (7.846,12.658) and (8.439,12.653) .. (8.864,12.381)
                .. controls (8.386,12.730) and (7.770,12.710) .. (7.607,12.557)
                .. controls (7.445,12.405) and (7.077,12.553) .. (6.862,12.591)
                .. controls (6.647,12.634) and (6.446,12.644) .. (5.891,12.548)
                .. controls (5.714,12.620) and (5.657,12.715) .. (5.867,12.792)
                .. controls (6.082,12.835) and (6.537,12.806) .. (7.014,12.806)
                .. controls (7.258,12.806) and (7.430,12.935) .. (7.631,12.811)
                .. controls (7.832,12.682) and (8.033,12.873) .. (8.324,13.016)
                .. controls (7.999,12.906) and (7.746,12.768) .. (7.660,12.897)
                .. controls (7.574,13.031) and (7.263,12.916) .. (7.163,12.940)
                .. controls (7.292,12.992) and (7.493,12.935) .. (7.507,13.136)
                .. controls (7.526,13.337) and (8.057,13.284) .. (7.999,13.652)
                .. controls (7.923,13.294) and (7.540,13.423) .. (7.406,13.179)
                .. controls (7.277,12.940) and (6.885,13.055) .. (6.680,13.031)
                .. controls (6.250,12.973) and (5.557,13.074) .. (5.461,12.639)
                .. controls (5.217,12.600) and (4.921,12.514) .. (4.682,12.419)
                .. controls (4.687,12.538) and (5.103,13.026) .. (5.461,13.102)
                .. controls (5.815,13.174) and (6.532,13.155) .. (6.699,13.681)
                .. controls (6.345,13.174) and (5.614,13.356) .. (5.294,13.260)
                .. controls (4.978,13.160) and (4.213,12.285) .. (3.898,12.189)
                .. controls (3.583,12.099) and (3.272,11.922) .. (3.157,11.989)
                .. controls (3.062,12.084) and (3.119,12.600) .. (3.583,12.581)
                .. controls (4.046,12.562) and (4.362,12.882) .. (4.620,13.356)
                .. controls (4.271,12.897) and (3.721,12.615) .. (3.358,12.744)
                .. controls (2.990,12.868) and (2.823,11.936) .. (2.598,11.812)
                .. controls (2.373,11.683) and (1.594,11.377) .. (1.503,11.042)
                .. controls (1.465,11.377) and (2.106,11.903) .. (2.043,12.677)
                .. controls (1.953,11.922) and (1.522,11.783) .. (1.317,11.377)
                .. controls (1.111,10.966) and (1.212,10.555) .. (1.417,9.991)
                .. controls (1.422,9.971) and (3.836,6.267) .. (2.588,5.985)
                .. controls (2.235,5.904) and (1.981,5.813) .. (1.757,5.660)
                .. controls (1.871,4.489) and (2.187,2.481) .. (2.650,2.405)
                .. controls (3.176,2.314) and (4.185,2.878) .. (4.175,3.289)
                .. controls (4.166,3.724) and (3.817,4.766) .. (3.994,5.201)
                .. controls (3.759,5.397) and (3.277,5.617) .. (3.387,5.951) ;
            \fill[fill=red!90!blue!70!white] (2.942,5.378)
                .. controls (2.875,5.339) and (2.736,5.378) .. (2.722,5.459)
                .. controls (2.712,5.535) and (2.741,5.583) .. (2.779,5.674)
                .. controls (3.013,6.066) and (3.200,6.716) .. (2.980,7.266)
                .. controls (2.803,7.720) and (2.564,8.226) .. (2.296,8.661)
                .. controls (2.048,9.068) and (1.627,9.670) .. (1.565,10.363)
                .. controls (1.579,10.105) and (1.837,9.737) .. (1.952,9.512)
                .. controls (2.263,8.905) and (3.071,7.815) .. (3.214,7.180)
                .. controls (3.362,6.534) and (3.243,5.875) .. (3.071,5.597)
                .. controls (3.066,5.540) and (3.028,5.425) .. (2.942,5.378) ;
            \fill[fill=red!90!blue!70!white] (2.793,2.424)
                .. controls (2.253,3.542) and (2.115,3.948) .. (2.014,5.712)
                .. controls (2.019,5.784) and (2.349,5.903) .. (2.339,5.841)
                .. controls (2.301,4.665) and (2.650,3.289) .. (3.138,2.486)
                .. controls (3.032,2.447) and (2.870,2.419) .. (2.793,2.424) ;
            \draw (2.526,6.104)
                .. controls (2.990,5.937) and (2.541,6.688) .. (2.545,6.673)
                .. controls (2.454,6.831) and (2.521,6.869) .. (2.564,6.922)
                .. controls (2.717,7.123) and (2.932,7.165) .. (2.660,7.471) ;
            \draw (.461,3.241)
                .. controls (.576,3.657) and (2.187,3.657) .. (2.129,3.198) ;
            \draw[fill=red!30!white] (3.687,5.731)
                .. controls (3.520,6.229) and (5.541,6.616) .. (5.556,6.214)
                .. controls (5.570,5.827) and (5.790,5.325) .. (5.800,4.919)
                .. controls (5.809,4.584) and (4.480,4.250) .. (4.308,4.517)
                .. controls (4.084,4.866) and (3.854,5.249) .. (3.687,5.731) ;
            \draw (4.299,4.531)
                .. controls (4.189,4.885) and (5.752,5.315) .. (5.800,4.961) ;
            \draw[fill=blue!20!red!80!white] (5.097,4.594)
                .. controls (5.436,4.694) and (5.699,4.823) .. (5.680,4.919)
                .. controls (5.656,5.009) and (5.360,5.000) .. (5.020,4.899)
                .. controls (4.681,4.799) and (4.428,4.675) .. (4.452,4.584)
                .. controls (4.475,4.493) and (4.757,4.493) .. (5.097,4.594) ;
        \end{scope}
    }
}

\newcommand{\E}{\mathbb{E}}
\newcommand{\I}{\mathbb{I}}
\newcommand{\Loss}{\mathcal{L}}
\newcommand{\pr}{\mathbb{P}}

\newcommand{\opt}{\mathrm{OPT}}
\newcommand{\alg}{\mathsf{ALG}}

\newcommand{\rel}{\hat{T}}
\newcommand{\dcg}{\text{DCG}}
\newcommand{\ndcg}{\text{NDCG}}
\newcommand{\idcg}{\text{IDCG}}
\newcommand{\dcgk}[1]{\dcg @{#1}}
\newcommand{\ndcgk}[1]{\ndcg @{#1}}
\newcommand{\idcgk}[1]{\idcg @{#1}}

\newcommand{\ipcw}{\text{IPCW}}
\newcommand{\ey}{\text{EY}}
\newcommand{\restrict}{(\tau)}
\newcommand{\pwg}{\mathsf{PWA}}

\usepackage{cleveref}
\Crefname{property}{Property}{Properties}
\Crefname{assumption}{Assumption}{Assumption}

\title{Aligning Data-Driven Predictors with Allocation: \\
A Decision-Focused Approach to Survival Analysis}

\author[1]{Itai Zilberstein\thanks{Correspondence to \texttt{izilbers@cs.cmu.edu}}}
\author[1]{Ioannis Anagnostides}
\author[1,2]{Tuomas Sandholm}

\affil[1]{Department of Computer Science, Carnegie Mellon University, Pittsburgh, PA}
\affil[2]{\small{Additional affiliations: Strategy Robot, Inc., Strategic Machine, Inc., Optimized Markets, Inc.}}

\begin{document}

\maketitle
\thispagestyle{empty}

\begin{abstract}
  Machine learning predictors have become essential tools for guiding automated decision making. However, a major misalignment persists: predictive models are typically optimized in terms of standard statistical metrics in isolation from the algorithmic tasks they inform.  We highlight this incongruity in the high-stakes domain of organ allocation by demonstrating that any algorithm relying on (even highly accurate) survival predictors optimized for standard metrics---such as the \textit{Concordance index (C-index)}---can yield arbitrarily poor outcomes when used for allocation, failing to guarantee utility better than a uniform random selection.
  To bridge the gap between survival analysis and policy optimization, we introduce a \emph{decision-focused learning} approach based on optimizing \emph{normalized discounted cumulative gain (NDCG)}, a mainstay metric in information retrieval. We establish the utility of NDCG in survival analysis by proving that it translates to guarantees on the performance of allocation. Empirically, we propose a bootstrapping approach to optimize the NDCG of existing survival models. Unlike prior work, we also address the challenge of right censorship when evaluating ranking. On historical heart transplant data from the US, our method dramatically boosts the NDCG of baseline models by 50-100\%, which translates to tens of thousands of additional life years gained annually when deployed for transplant allocation. We anticipate that our framework will find broader applications in decision making with predictions.
\end{abstract}
 
\newpage
\setcounter{page}{1}

\section{Introduction}

Real-world decision making increasingly relies on algorithms powered by \textit{machine learning (ML)} predictors trained on vast amounts of historical data. From resource allocation to automated planning and scheduling, these data-driven systems are deployed in high-stakes environments. However, an underlying disconnect persists: the development of classical algorithms for these problems is often disjoint from the design of the predictive models they leverage. ML models are typically optimized in isolation for standard statistical metrics, while the downstream algorithms using these predictions either fail to account for the predictor's performance profile or suffer because the algorithmic objective is misaligned with the model's training. The gap between predictive accuracy and algorithmic utility can lead to catastrophic outcomes, particularly in high-stakes applications such as organ allocation.

Organ transplantation is the treatment of choice for many terminal illnesses. Across organ types, the demand for deceased-donor organs outpaces the available supply~\citep{Cameli22:Donor}. In the US alone, thousands of patients with end-stage heart failure are waitlisted for a life-saving organ. 

The current US heart transplant allocation policy places patients into rigid hierarchical tiers and allocates the organ to the highest-priority compatible patient. The policy often treats patients with heterogeneous clinical profiles as effectively identical. A major criticism of the policy is that it does not leverage finer-grained predictions of pretransplant mortality and post-transplant outcomes~\citep{Shore20:Changes,Zhang24:Development}. As a result, the US is transitioning to new data-driven solutions to improve the efficiency of the heart transplantation system~\citep{Papalexopoulos24:Reshaping}. The allocations of other organs, such as lungs~\citep{OPTN25:ContinuousDistribution,Gottlieb17:Lung}, livers~\citep{Kamath01:Model,Allen24:Transplant}, and kidneys~\citep{Abraham07:Clearing,Mayer06:Eurotransplant}, already rely on such computational methods in the US and abroad.

A common data-driven approach to allocating organs relies on predictors of transplant outcomes, such as the expected life-years gained from an operation~\citep{Berrevoets21:Learning,Berrevoets20:OrganITE,Zilberstein26:Near,Zhang24:Development}. The field of \textit{survival analysis} has developed powerful statistical models for estimating such outcomes~\citep{Cox72:Regression,Katzman18:Deepsurv,Lee18:Deephit,Wei92:Accelerated,Nagpal21:Deep}. Yet, when these models are integrated into allocation mechanisms, the aforementioned disconnect surfaces. Survival models are traditionally optimized for and evaluated on metrics such as \textit{Concordance index (C-index)} or average error, which measure aggregate performance across an entire dataset. However, when a donor heart arrives, the matching algorithm does not need perfect point-estimates of survival for all patients; rather, it requires a guarantee that it can identify the single best available match.

\paragraph{Our contributions} As we will demonstrate, matching with a predictor that is optimized for C-index can have arbitrarily bad outcomes. We show that any deterministic algorithm relying on a predictor with near-perfect C-index can obtain a near-zero fraction of the optimal utility (\Cref{prop:deterministic-c-index}). We then prove that no algorithm relying on a predictor with near-perfect C-index can guarantee more utility than random selection (\Cref{prop:randomized-c-index}), showing that C-index is a non-informative measure for allocating even a single donor. This failure is not just restricted to C-index. Most aggregate metrics, such as average error, can also lead to arbitrarily bad outcomes. 

\begin{figure}[t!]
    \centering
    \scalebox{0.7}{\begin{tikzpicture}[
    font=\sffamily,
    >=Stealth,
    patient/.style={circle, draw=patientblue, fill=patientblue!10, thick, inner sep=0pt, minimum size=0.8cm, font=\bfseries\small},
    donor/.style={inner sep=0pt, minimum size=0pt},
    selected_match/.style={->, very thick, edgegreen},
    potential_match/.style={->, dashed, gray!60, thick},
    edge_label/.style={midway, fill=white, inner sep=2.5pt, text=edgegreen, font=\bfseries},
    dots/.style={font=\bfseries\Large, text=gray!80}
]

\node[donor] (d1) at (-5, 2.0)  {\tikz{\pic[scale=0.06]{complexheart};}};

\node[align=center, font=\bfseries] at (5, 2) {Waitlisted\\patients};


\node[patient] (p1) at (3, 3.5) {P1};
\node[patient] (p2) at (3, 2.0) {P2};

\node[dots] at (3, 1.35) {$\vdots$};

\node[patient] (p3) at (3, 0.5) {P9};
\node[patient] (p4) at (3, -1.0) {P10};


\draw[potential_match] (d1) -- (p1) node[edge_label] {\textcolor{black}{1 yr}, \textcolor{edgegreen}{1 yr}, \textcolor{red}{11 yrs}};
\draw[potential_match] (d1) -- (p2) node[edge_label] {\textcolor{black}{2 yrs}, \textcolor{edgegreen}{3 yrs}, \textcolor{red}{2 yrs} }; 
\draw[potential_match] (d1) -- (p3) node[edge_label] {\textcolor{black}{9 yrs}, \textcolor{edgegreen}{10 yrs}, \textcolor{red}{9 yrs} }; 

\draw[potential_match] (d1) -- (p4) node[edge_label] {\textcolor{black}{10 yrs}, \textcolor{edgegreen}{8 yrs}, \textcolor{red}{10 yrs} }; 

\end{tikzpicture}}
    \caption{Illustration of heart transplant allocation with predicted outcomes. The leftmost value shows the unknown, ground-truth patient survival, the middle \textcolor{edgegreen}{green} value shows the predictions with $\ndcgk{1} = 0.9$, and the rightmost \textcolor{red}{red} value shows the predictions with C-index $= 0.8$.}
    \label{fig:heart-matching}
\end{figure}

We take a step towards bridging the gap between predictive modeling in survival analysis and the requirements of downstream allocation policies. While we focus on matching with predicted edge-weights and predictors for survival analysis, our methods provide a template for evaluating and optimizing ML models whose primary purpose is to inform discrete allocation decisions. 

We begin by establishing a formal link between a predictor's $\ndcgk{k}$ and the utility guarantee of downstream allocation (\Cref{thm:random-ndcgk}). We prove that the $\ndcgk{1}$ of a predictor translates to a provable guarantee on the utility of greedy allocation policies (\Cref{cor:greedy-ndcg}), a property not shared by the C-index.~\Cref{fig:heart-matching} illustrates this discrepancy.  

We then introduce the use of \textit{normalized discounted cumulative gain (NDCG)}~\citep{Jarvelin02:Cumulated,Wang13:Theoretical} for survival analysis. NDCG cannot be directly applied to survival analysis due to right censorship: many data points are only represented by a lower bound on their true survival times because the patient is still alive or their follow-up has ended. We propose two novel estimators of NDCG for right censored data, and prove that both provide unbiased estimates of the true \textit{discounted cumulative gain (DCG)}. We show how such estimators can be used to select the model with superior NDCG.

Finally, we propose a method to bootstrap current survival predictors to optimize a model for NDCG. We show using real historical heart transplant data that our estimators of NDCG can accurately identify model superiority and our bootstrapping method significantly improves the NDCG, roughly doubling the $\ndcgk{1}$ from baseline models. These gains are monumental: applying the increase translates to nearly 50,000 additional life years annually in the US alone.\footnote{Assuming 4,000 annual transplants with a median graft survival of 12 years~\citep{Colvin25:OPTN}.}

Our work exposes failures in the current design of high-stakes decision-making systems for organ allocation, and we offer theoretically grounded solutions to these failures. We show it is unsafe to assume that better statistical prediction yields better policy outcomes, unveiling that current mechanisms, including those for allocating lungs, livers, and kidneys, are misaligned with their life-saving objectives and cannot guarantee outcomes better than random selection. Beyond transplantation, for ML to be safely deployed, its predictive components must be aligned to the downstream actions they inform, and our methods support this.

The mismatch between prediction and optimization is studied more broadly in the literature, and our work is the first to connect survival analysis with \textit{decision-focused} and \textit{end-to-end learning}~\citep{Donti17:Task,Wilder19:Melding,Elmachtoub22:Smart,Mandi24:Decision,Capitaine25:Online}. These lines of work focus on aligning ML models with the decision-making tasks they inform. In healthcare, this mismatch has also been recognized in the context of causal treatment effects~\citep{Vanderschueren24:Metalearners,Kamran24:Learning,Frauen25:Treatment,Fernandez22:Causal,Arno26:Rank}. Our paper focuses on survival analysis, which has the unique challenge of right censorship, and predicting the top-ranked candidate. We are the first to employ such techniques for organ allocation. We provide further discussion of related work in~\Cref{sec:related}.
\section{Preliminaries}

We begin by reviewing standard predictive measures from information retrieval and survival analysis. 

\paragraph{DCG and NDCG}

Many information retrieval settings are concerned with providing an accurate ranking of a set of datapoints (\textit{e.g.}, recommendations). Given $N$ inputs, we typically care about the $k$ highest-ranked predictions rather than the entire population. Let $T_i$ denote the relevance (\textit{e.g.}, utility) of the point ranked $i$th by a prediction model where a lower rank means higher utility. In standard settings, the ground-truth relevance is known. The \textit{discounted cumulative gain at $k$ ($\dcgk{k}$)} evaluates the quality of the top $k$ ranked items, $\dcgk{k} = \sum_{i=1}^k \frac{T_i}{\log_2(i + 1)}$. To normalize $\dcg$, we compare it to the \textit{ideal $\dcgk{k}$ ($\idcgk{k}$)}, which is the maximum $\dcgk{k}$ achievable if the ranking were perfectly ordered by the true relevance. The \textit{normalized discounted cumulative gain ($\ndcgk{k}$)} is $\nicefrac{\dcgk{k}}{\idcgk{k}}$. So, $\ndcgk{k} = 1$ represents a perfect ordering of the top $k$ points. For further background on information retrieval, we refer to~\citet{Burges10:Ranknet} and~\citet{Schutze08:Introduction}.

\paragraph{Survival analysis}
In the typical setting for survival analysis, we are given a dataset of individuals $i \in \{1, \dots, N\}$. 
Let $T^*_i$ denote the true unobserved survival time of person $i$ and $C_i$ the censoring time. The observable random variable $T_i = \min\{T^*_i, C_i\}$ and the event indicator $\delta_i = \I\{T^*_i \leq C_i\}$ where $\I$ is the binary indicator function. Let $X_i \in \mathbb{R}^d$ be the baseline covariate vector for patient $i$. Instead of predicting an unbounded survival time, we can also shift the target to predict survival within a fixed horizon, $\tau$, where $T^{\restrict,*}_i = \min\{T^*_i, \tau\}$ and $\delta_i^{\restrict} = \max\left\{\delta_i, \I\{T_i \geq \tau\}\right\}$.  We define $S(t \mid X) = \pr(T^* > t \mid X)$ as the true conditional survival function and $G(t \mid X) = \pr(C > t \mid X)$ as the true conditional censoring survival function. We assume that the covariates $X$ capture both the survival and censoring mechanisms and they are independent conditioned on $X$. 

\begin{assumption}[Conditionally independent censoring]\label{ass:censoring}
$(T^* \perp\!\!\!\perp C) \mid X$.
\end{assumption}


For heart transplantation, we aim to predict how long a new organ will sustain a patient following an operation. A ubiquitous challenge with healthcare datasets is right censorship due to patients stopping reporting. We know the last time a patient reported their condition, not the true event time. 

To adapt $\ndcg$ for survival data, the relevance score becomes the true survival time (or the restricted true survival time). However, for censored patients, the true relevance remains unobservable. Therefore, we require new estimators to compute $\ndcg$ for right censored datasets.

\paragraph{Concordance index}

The standard metric for evaluating predicted rankings in survival analysis is the \textit{concordance index (C-index)}. The C-index measures pairwise accuracy and is defined as the ratio of concordant pairs among all \emph{comparable} pairs of patients. There are different ways of computing the C-index for right censored datasets~\citep{Gonen05:Concordance,Uno11:C}, and we adopt the commonly used Harrell's C-index~\citep{Harrell82:Evaluating}. These variations do not change the underlying principle of the metric. A pair of patients $(i,j)$ is comparable if we definitively know which patient experienced the event of interest (death, graft failure, \textit{etc}.) first. A pair is comparable only if the patient with the shorter observed time experienced the event (\textit{i.e.}, $T_i < T_j$ and $\delta_i =1$). 

\section{Aligning machine learning predictors with policy optimization}

In this section, we analyze how ML predictors interact with the allocation policies they inform. We begin by presenting a motivating example that highlights the limitations of allocating based on a predictor optimized for C-index (or other aggregate metrics). We assume that all utility is non-negative: following a transplant, the time a patient survives cannot be less than zero. 

\subsection{A motivating example}

Consider a fully-observed dataset (\textit{e.g.}, $\delta_i =1 ~~\forall i$), consisting of $N$ transplant candidates and a single donor to be allocated. The goal is to allocate the donor to the patient with the best outcome (\textit{e.g.}, maximize utility). 
Suppose the utility of allocating to patient $i$ is $T_i^* = i$. An allocation algorithm does not know the true utility, but rather relies on a predictive model.

Now suppose a model correctly ranks the utility for patients $i \in[2,N]$, but incorrectly predicts that patient $1$, who has the worst outcome, has the best one. When we evaluate the concordance of this model, we obtain a C-index of $1 - \frac{2}{N}$. As $N$ grows, the C-index quickly approaches $1$ despite the model recommending the patient with the worst outcome.


For $N=100$, the above example results in a C-index of 0.98. This value is much higher than the state-of-the-art models for predicting graft survival following a heart transplant, which are around $0.6$~\citep{Aleksova20:Risk,Lee18:Deephit,Anagnostides25:Policy,Ayers21:Using}. However, it is evident that despite a high C-index, a model can be arbitrarily bad at predicting the top candidate. A greedy algorithm allocating transplants based on this predictor would make catastrophic decisions. The C-index is not the right measure of a predictor that is being leveraged by a decision-making algorithm. We can formalize the failure of matching with concordance for an allocation algorithm that selects only a single candidate using a predictor.

\begin{restatable}{proposition}{deterministiccindex}\label{prop:deterministic-c-index}
    For any deterministic algorithm selecting a single candidate based on a predictor $\hat{f}$, and for any $c \in (0,1)$ and $\rho \in (0,1)$, there exists a set of $N$ candidates and a predictor $\hat{f}$ with C-index at least $c$, such that the algorithm achieves at most a $\rho$ fraction of the optimal utility.
\end{restatable}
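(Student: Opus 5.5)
The plan is an adversarial construction that generalizes the motivating example above. The key observation is that a deterministic algorithm sees only the predictor's outputs (and possibly covariates, which we may take to be identical or uninformative). Its choice is therefore a fixed function of the predicted scores. We fix the predicted scores first, read off which candidate the algorithm picks, and only then assign true utilities. We make the picked candidate nearly worthless while keeping almost all pairs concordant.

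Concretely, we fix $N$ large and let the predictor output the scores $\hat f_1,\dots,\hat f_N$, all distinct, say $\hat f_i = i$. Since the algorithm is deterministic, it selects a specific index $j$ on this input. We then define the true (fully observed, $\delta_i=1$) utilities by a relabeling of the ranks. The candidates other than $j$ keep their predicted order with true utilities in $\{1,\dots,N-1\}$ consistent with $\hat f$. Candidate $j$ receives true utility $\epsilon>0$, smaller than everyone else; we may even take utility $0$, which is allowed since utilities are non-negative, provided ties are handled consistently in the C-index. The optimal utility is then $N-1$ and the algorithm obtains $\epsilon$. The ratio is $\epsilon/(N-1)\le\rho$ for small $\epsilon$, or even with $\epsilon=1$ once $N\ge 1+1/\rho$.

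Next we bound the C-index. With all events observed and distinct times, every pair is comparable, giving $\binom N2$ pairs. Only pairs involving $j$ can be discordant, and there are at most $N-1$ of them. The C-index is therefore at least $1-\frac{N-1}{\binom N2}=1-\frac 2N$. Choosing $N\ge \max\{2/(1-c),\,1+1/\rho\}$ gives a C-index of at least $c$ and a utility fraction of at most $\rho$ simultaneously.

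The main subtlety is the quantifier order. The predictor must be fixed before we know the algorithm's choice, and the true utilities, which the algorithm cannot observe, are then chosen adversarially. We must make sure the algorithm's selection depends only on $\hat f$ (and on covariates we hold fixed), so that changing the hidden $T^*$ does not change its output. Once this is set up, the remaining calculation is routine.
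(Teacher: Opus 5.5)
Your proposal is correct and matches the paper's proof: fix the predicted ranking, read off the deterministic algorithm's choice, then give that candidate a tiny true utility while ranking all others concordantly, so at most $N-1$ pairs are discordant, the C-index is at least $1-2/N$, and any $N \ge 2/(1-c)$ suffices. One cosmetic point: your $\epsilon=1$ variant ties the selected candidate with the candidate of utility $1$, whereas your C-index count assumes distinct times, so keep $\epsilon\in(0,1)$ (which already gives ratio below $\rho$ once $N \ge 1+1/\rho$).
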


Furthermore, not only do deterministic algorithms fail, but \emph{no algorithm}---even a randomized one---relying on concordance can hope to do better than uniform random guessing. 
\begin{restatable}{proposition}{randomizedcindex}\label{prop:randomized-c-index}
    For any algorithm selecting a single candidate based on a predictor $\hat{f}$ and for any $c \in (0,1)$, there exists a set of $N$ candidates and a predictor $\hat{f}$ with C-index at least $c$, such that the algorithm cannot guarantee more expected utility than that of a uniform random selection.
\end{restatable}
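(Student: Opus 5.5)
We will prove \Cref{prop:randomized-c-index} with a symmetrization (Yao-style) argument. We give the algorithm one fixed predictor $\hat{f}$ with near-perfect C-index and draw the true utilities from a family of instances, all consistent with $\hat{f}$ up to the C-index constraint. Because the algorithm only sees $\hat{f}$ (and the candidates are otherwise indistinguishable), its choice is a distribution $p$ over candidates that does not depend on which instance is realized. We then show that, for some instance in the family, $p$ earns at most the utility of uniform selection.

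\textbf{Construction.} Fix $c\in(0,1)$ and choose $N$ large, say $N \ge 2/(1-c)$. Let the predictor rank the candidates as $\hat{f}(1) > \hat{f}(2) > \dots > \hat{f}(N)$ (candidate $1$ predicted best). For each $j\in\{1,\dots,N\}$ define instance $I_j$ as follows.
\begin{itemize}
\item Candidate $j$ has true utility $1$ and every other candidate has utility $0$. Ties among the zeros are not comparable pairs, which only helps.
\item Better, to stay close to the motivating example, take a strictly ordered instance: the true order agrees with the predicted order except that candidate $j$ is moved to the top, with utility $M$, and all others have utilities in $[0,\varepsilon]$.
\end{itemize}
Moving one element changes at most $N-1$ of the $\binom{N}{2}$ pairs, so the C-index is at least $1-2/N \ge c$. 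The same holds for the $0/1$ version: only the $N-1$ pairs involving $j$ can be discordant, and ties are excluded. Every instance is fully observed, so all pairs are comparable.

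\textbf{Averaging step.} Let $p_j$ be the probability that the algorithm selects candidate $j$ given $\hat{f}$. The predictor is identical across the instances $I_1,\dots,I_N$, so $p$ is the same for all of them. On instance $I_j$ the expected utility is at most $p_j M + \varepsilon$, while uniform selection earns at least $M/N$. Since $\sum_j p_j = 1$, some $j$ has $p_j \le 1/N$. On that instance the algorithm gets at most $M/N + \varepsilon$, which is the uniform-selection value up to $\varepsilon$. In the $0/1$ version, where $\varepsilon=0$, it is exactly at most $1/N$, the uniform value. So the algorithm cannot guarantee more than uniform random selection.

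\textbf{Main obstacle.} The delicate point is making precise what the algorithm ``relies on.'' If the algorithm could see covariates or labels correlated with $j$, the symmetrization would break. We will therefore formalize the algorithm as a (possibly randomized) map from the predictor's outputs on the candidates to a distribution over candidates; the candidates are otherwise indistinguishable. We also have to check that the C-index bound holds uniformly in $j$ under Harrell's definition with ties. Using strict utilities, the $M$-versus-$\varepsilon$ construction, avoids tie conventions entirely. The cost is the additive $\varepsilon$, which we can take to be $0$ by using the $0/1$ version with ties excluded.
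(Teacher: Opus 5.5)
Your main construction is correct and takes a genuinely different, essentially dual, route from the paper's. That construction puts utility $M$ on candidate $j$ and strictly ordered utilities in $[0,\varepsilon]$ on everyone else. The paper takes the candidate the algorithm favors \emph{most}, with mass $\Delta=\max_i p_i\ge 1/N$, and demotes it to the bottom with utility about $\rho$, while all others have utility about $1$. It concludes, after letting $\varepsilon\to 0$, that the ratio is at most $1-\Delta(1-\rho)$, which is largest at $\Delta=1/N$, i.e.\ for uniform selection. You instead take the candidate the algorithm favors \emph{least}, $p_j\le 1/N$, and promote it to the top.

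The ingredients are otherwise the same. One predictor is fixed in advance, and the algorithm's distribution depends only on it. A single displaced candidate creates at most $N-1$ discordant pairs, so the C-index is at least $1-2/N$. Then pigeonhole is applied to $p$. The ``Yao-style averaging'' framing is unnecessary; this is a direct adversary argument, as in the paper.

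Your version buys a stronger conclusion. On your instance uniform selection is itself bad, so every algorithm gets at most $U(\opt)/N+\varepsilon$, which is a randomized counterpart of \Cref{prop:deterministic-c-index}. On the paper's instance uniform selection is nearly optimal, so it only shows that concentrating probability cannot beat hedging. You also compare absolute expected utilities on a single instance rather than ratio bounds in a limit.

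The one incorrect step is your fallback for removing $\varepsilon$: the $0/1$ instance does not have C-index at least $c$. The paper's convention makes a pair comparable only if $T_a<T_b$ and $\delta_a=1$. Tied zeros are therefore removed from the denominator, not merely the numerator, and the only comparable pairs are the $N-1$ pairs involving $j$. Of these, $j-1$ are discordant, so the C-index is $(N-j)/(N-1)$. You cannot control $j$, since it is forced by where the algorithm places little mass. For example, an algorithm uniform over the top $\lceil N/2\rceil$ predicted candidates forces $j>N/2$ and a C-index below $1/2$.

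Drop that variant. If you want an exact inequality, split into cases instead. If $p$ is uniform, there is nothing to prove. Otherwise the minimizing $j$ has $p_j<1/N$ strictly. Because the utilities are chosen after $\hat f$, and hence $p$, is fixed, you may take $\varepsilon<(1/N-p_j)M$. The algorithm's expected utility is then at most $p_jM+\varepsilon<M/N$, which is at most the uniform value. This is exact, and slightly cleaner than the paper's $\varepsilon\to 0$ step.
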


This failure is not unique to the C-index. Many aggregate metrics, including standard average errors, also fail to provide utility guarantees. However, if we compute the $\ndcgk{1}$ of the above predictor with $N=100$, we would indeed see the catastrophic performance; the $\ndcgk{1} = 0.01$.


\subsection{Allocating with NDCG}

The failure of C-index arises from its inability to identify high-utility candidates. NDCG, on the other hand, captures exactly this quantity, and we can indeed bound the worst-case utility an allocation algorithm can achieve against the optimal allocation, $U(\opt)$. Specifically, given a predictor $\hat{f}$ with $\ndcgk{k}$ at least $\alpha$, we can randomly select one of the top $k$ ranked candidates proportional to the logarithmic discount in the $\dcgk{k}$ definition. We refer to this algorithm as the \textit{position-weighted allocation algorithm} ($\pwg$) which we present in~\Cref{alg:pwg}.

\begin{restatable}{theorem}{randomndcgk}\label{thm:random-ndcgk}
    Given $N$ candidates and a predictor $\hat{f}$ with $\ndcgk{k}$ at least $\alpha$, algorithm $\pwg$ (\Cref{alg:pwg}) selecting a single candidate based on $\hat{f}$ achieves expected utility at least $\nicefrac{\alpha}{W_k} \cdot U(\opt)$ where $W_k = \sum_{i=1}^k \nicefrac{1}{\log_2(i+1)}$.
\end{restatable}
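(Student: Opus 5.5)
The plan is to compute the expected utility of $\pwg$ directly and observe that it equals a rescaled $\dcgk{k}$. I take $\pwg$ to do the following: it ranks the candidates by $\hat{f}$ and selects the candidate in position $i \le k$ with probability
\[
p_i = \frac{1/\log_2(i+1)}{W_k},
\]
so that $\sum_{i=1}^k p_i = 1$. Write $T_i$ for the true utility of the candidate ranked $i$th by $\hat{f}$. Linearity of expectation then gives
\[
\E[U(\pwg)] = \sum_{i=1}^k p_i T_i = \frac{1}{W_k}\sum_{i=1}^k \frac{T_i}{\log_2(i+1)} = \frac{\dcgk{k}}{W_k}.
\]

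Next, the hypothesis $\ndcgk{k} \ge \alpha$ means $\dcgk{k} \ge \alpha \cdot \idcgk{k}$. Substituting this into the identity above yields
\[
\E[U(\pwg)] \ge \frac{\alpha}{W_k}\,\idcgk{k}.
\]

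It remains to compare $\idcgk{k}$ with $U(\opt)$. For a single donor, $U(\opt)$ is the largest true utility $T^*_{\max}$. In the ideal ordering this value sits in position $1$, where the discount is $1/\log_2 2 = 1$. Every other term in $\idcgk{k}$ is nonnegative, because the paper assumes all utilities are nonnegative. Therefore
\[
\idcgk{k} \ge T^*_{\max} = U(\opt),
\]
which gives the claimed bound $\E[U(\pwg)] \ge \frac{\alpha}{W_k}\, U(\opt)$.

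The only delicate point is this final comparison. It uses nonnegativity of utilities, and it requires $k \le N$ so that the top-$k$ sums are well defined; if $k > N$, I would truncate $k$ to $N$, which only shrinks $W_k$ and so strengthens the bound. I would also note that the inequality $\idcgk{k} \ge U(\opt)$ can be loose, since it discards all positions after the first. That looseness is why the factor $1/W_k$ appears, and the bound is tight for $k=1$, where $W_1 = 1$.
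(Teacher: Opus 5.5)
Your proposal is correct and matches the paper's proof step for step. You show $\E[U(\pwg)] = \dcgk{k}/W_k$, apply $\dcgk{k} \ge \alpha\,\idcgk{k}$, and then bound $\idcgk{k}$ below by its first term $U(\opt)$ using nonnegativity of utilities, which is exactly the paper's argument.
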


The bound we obtain is a function of both $\alpha$ and $k$, and degrades approximately linearly in $k$. For $k=5$, $\pwg$ guarantees roughly $\alpha/3$ of the optimal utility. $\ndcgk{k}$ can have higher variance at lower values of $k$, so while optimizing for lower values of $k$ is theoretically better, it may be the case that there are practical tradeoffs for robustness. When evaluating the predictor at $k=1$, the randomized policy reduces into the greedy algorithm, yielding a direct result for greedy allocation.

\begin{restatable}{corollary}{greedyndcg}\label{cor:greedy-ndcg}
    Given $N$ candidates and a predictor $\hat{f}$ with $\ndcgk{1}$ at least $\alpha$, algorithm $\pwg$ selecting a single candidate based on $\hat{f}$ achieves utility at least $\alpha \cdot U(\opt)$.
\end{restatable}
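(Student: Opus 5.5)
The plan is to obtain the corollary as the special case $k=1$ of \Cref{thm:random-ndcgk}, and to also give a direct one-line argument so the statement can be read without unpacking $\pwg$.

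First I will instantiate the theorem at $k=1$. The normalizer is $W_1 = \nicefrac{1}{\log_2 2} = 1$, so the theorem gives expected utility at least $\alpha \cdot U(\opt)$. It then remains to see that $\pwg$ is deterministic when $k=1$. It draws one of the top $k$ candidates with probability proportional to $\nicefrac{1}{\log_2(i+1)}$. For $k=1$ there is a single candidate, the one ranked first by $\hat{f}$, and it is chosen with probability one. So $\pwg$ is exactly greedy selection, and its expected utility equals its realized utility. This turns the expectation bound into the deterministic statement ``achieves utility at least $\alpha \cdot U(\opt)$''.

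As a sanity check independent of the theorem, I will also verify the claim directly from the definitions. Let $T_1$ denote the true utility of the top-ranked candidate under $\hat{f}$. Then $\dcgk{1} = T_1/\log_2 2 = T_1$. The ideal ranking puts the best candidate first, so $\idcgk{1} = \max_i T_i = U(\opt)$, since with a single donor $\opt$ allocates to the best candidate. The hypothesis $\ndcgk{1} \ge \alpha$ then reads $T_1 \ge \alpha\, U(\opt)$, which is precisely the utility of greedy.

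The only subtle point is the degenerate case $U(\opt)=0$, where $\ndcg$ is an undefined $0/0$. In that case all utilities are $0$ by non-negativity, so the bound holds trivially. I do not expect any genuine obstacle beyond stating this convention.
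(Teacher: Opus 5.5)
Your proposal is correct and follows the paper's route: the paper proves the corollary in one line by specializing \Cref{thm:random-ndcgk} to $k=1$ and noting that $W_1 = 1$. Your additional points are sound and make the argument self-contained. These are the observation that $\pwg$ with $k=1$ is deterministic greedy selection (so the expectation bound is a realized bound), the direct check $\dcgk{1} = T_{\pi(1)} \ge \alpha \cdot \idcgk{1} = \alpha \cdot U(\opt)$, and the treatment of the degenerate case $U(\opt)=0$.
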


These results stand in stark contrast to the performance of aggregate metrics which fail to guarantee any meaningful utility for single-item allocation. To align a data-driven predictor with the downstream allocation, these results prove that NDCG is a theoretically sound metric to optimize for. 

While \Cref{thm:random-ndcgk} establishes a bound for a single allocation, real-world scenarios often require sequential matching. Over multiple donor arrivals, we can apply the techniques we develop in the remainder of this paper to individual predictors for different representative donor types, consistently optimizing for the $\ndcgk{k}$.

\section{NDCG for censored datasets}

We have shown that optimizing for NDCG is a better target for predictors used by a greedy allocation algorithm than aggregate metrics like C-index. However, since the true relevance score $T^*_i$ is unobservable for right censored data points, we cannot directly compute the standard DCG metrics in the survival analysis context. In this section, we propose two estimators of DCG that account for censoring. We show that both are unbiased whether we use the true survival time or the restricted survival time as the relevance score. It follows from the linearity of expectation that the DCG is unbiased if the relevance is conditionally unbiased given $X_i$. We then discuss how unbiased estimates of DCG translate to estimates of NDCG and evaluating survival models for allocation.

\subsection{Unbiased estimates of relevance}

The first method replaces the unobserved survival time for censored patients with its conditional expectation given a survival function $\hat{S}$. We define the \textit{expected years (EY)} relevance estimator as

\begin{equation*}
    \rel_i^\ey = \delta_i T_i + (1 - \delta_i) \E_{\hat{S}}[T^*_i \mid T^*_i > T_i, X_i].
\end{equation*}

To prove that the EY estimator is unbiased, we need to assume that $\hat{S}$ is unbiased. 

\begin{assumption}[Unbiasedness of $\hat{S}$]\label{ass:survival-specification}
The conditional survival function $\hat{S}$ is conditionally unbiased such that the expected value of the estimated survival times match the true survival time. That is, 
$\E \left[ \E_{\hat{S}}[T^*_i \mid T^*_i > T_i, X_i] \right] = \E[T^*_i \mid T^*_i > T_i, X_i]$.
\end{assumption}

\begin{restatable}[Unbiasedness of EY estimator]{property}{unbiasednessEY}\label{prop:ey}
Under Assumptions \ref{ass:censoring} and \ref{ass:survival-specification}, $\rel^\ey_i$ is a conditionally unbiased estimator of $\E[T^*_i \mid X_i]$.
\end{restatable}

Using the same argument, we can also show that the estimator is unbiased in the restricted setting (\Cref{prop:restricted-ey}). The theoretical unbiasedness of the EY estimator relies on the assumption that the conditional survival estimator is unbiased over the censored population. While relying on nuisance estimators\footnote{A nuisance parameter is one that is not of primary interest but must be accounted for to analyze the target parameters.} is common in statistics, in some ways this approach seems circular: we require a survival model to compute the DCG of another survival model. But this also highlights where the power of our bootstrapping framework stems from. If we can use a survival model as a nuisance estimator, we can also leverage it to bootstrap another model that is optimized for extremal ranking.

An alternative to imputation is the \textit{inverse probability of censoring weighting (IPCW)}~\citep{Graf99:Assessment,Gerds06:Consistent}. IPCW discards censored patients from the evaluation and re-weights the observed instances to account for the discarded population, using the censoring function $\hat{G}(t \mid X)$. The censoring survival function $\hat{G}(t \mid X)$ is the probability of remaining uncensored up to time $t$ given $X$. The IPCW estimator is
\begin{equation*}
    \rel^{\ipcw}_i = \frac{\delta_i T_i}{\hat{G}(T_i \mid X_i)}.
\end{equation*}

In order to prove unbiasedness of the IPCW estimator, we need to assume that $\hat{G}$ is specified such that the inverse weights correctly recover the population expectation. 

\begin{assumption}[Correct specification of censoring model]\label{ass:censoring-calibration}
The conditional censoring model $\hat{G}$ is correctly specified such that it matches the true conditional censoring distribution $G(t \mid X) = \pr(C_i > t \mid X_i)$. We also assume that $\hat{G}(t \mid X) > 0$ for all $t, X$.
\end{assumption}

\begin{restatable}[Unbiasedness of IPCW estimator]{property}{unbiasednessIPCW}\label{prop:ipcw}
Under Assumptions \ref{ass:censoring} and \ref{ass:censoring-calibration}, $\rel^{\ipcw}_i$ is a conditionally unbiased estimator of $\E[T_i^* \mid X_i]$.
\end{restatable}

As is the case for the EY estimator, the IPCW estimator is also unbiased in the restricted setting (\Cref{prop:restricted-ipcw}). Restricting the horizon $\tau$ bounds the maximum weight at $1/\hat{G}(\tau \mid X)$ and is a practical solution for stability since IPCW weights can have high variance. \Cref{sec:appendix-estimators} discusses the estimators and the restricted setting further.

\subsection{NDCG from unbiased relevance}\label{sec:ndcg-bias}

Given unbiased estimates of relevance, linearity of expectation ensures that $\widehat{\dcg}$ and $\widehat{\dcgk{k}}$ are unbiased. However, the IDCG requires sorting these relevances. Because the maximum operator is convex, sorting noisy estimates amplifies positive errors, leading to a positive bias.
As a result, the estimate of NDCG is generally negatively biased. However, the bias does not affect the relative comparisons of two survival models. For a given dataset, $\widehat{\idcg}$ acts as a normalizer determined by the ground-truth scores and any nuisance estimators. It is independent of the models we evaluate. Therefore, given two survival models, $A$ and $B$, if $\widehat{\ndcg}_A > \widehat{\ndcg}_B$, then $\widehat{\dcg}_A > \widehat{\dcg}_B$. The evaluation preserves the relative ranking of the models, allowing us to determine whether model $A$ is superior to model $B$. In addition, $\widehat{\ndcg}$ is scale-consistent with respect to multiplication. If $\widehat{\ndcg}_A = 2\widehat{\ndcg}_B$, it implies model $A$ achieves twice the discounted gain as model $B$ on the dataset since $\widehat{\idcg}$ cancels out in the ratio. 
\section{Optimizing for NDCG via bootstrapping}
\label{sec:bootstrap-model}

In this section, we briefly describe some of the most commonly deployed models for survival prediction, and then present our approach for bootstrapping a survival predictor for superior NDCG. 

\paragraph{Baseline models} We include in our evaluation a suite of common predictors used for survival analysis. These include the non-parametric \textit{Kaplan-Meier (KM)} estimator~\citep{Kaplan58:Nonparametric}, the semi-parametric \textit{Cox regression (Cox)} model~\citep{Cox72:Regression}, the fully parametric \textit{accelerated failure time (AFT)} model~\citep{Wei92:Accelerated}, and the deep neural network models \textit{DeepSurv}~\citep{Katzman18:Deepsurv} and \textit{DeepHit}~\citep{Lee18:Deephit}. More details of these predictors can be found in~\Cref{sec:appendix-baseline-models}. These models are typically evaluated on their C-index. 

We now show how to leverage a conditional survival predictor $\hat{S}(t \mid X)$ to bootstrap another conditional survival predictor that is optimized specifically for NDCG.

\paragraph{Architecture}

Our bootstrapping framework operates using a two-stage approach. First, a base-survival model (\textit{e.g.}, a model from above) is trained on the censored survival data to produce a conditional survival function $\hat{S}(t \mid X)$. Using this baseline, we compute the imputed label given the covariates $X_i$, and a restriction to the horizon, $\tau$, $\hat{y}(X_i, \tau) = T_i + \int_{T_i}^\tau \nicefrac{\hat{S}(t \mid X_i)}{\hat{S}(T_i \mid X_i)} dt$, or if necessary, we default to the restricted mean $\hat{y}(X_i, \tau) = \int_{0}^\tau \hat{S}(t \mid X_i) dt$.
To construct the training labels for the second stage, we create a pseudo-label $y_i^*$ that blends the observed outcomes with the model's conditional expectations to handle censoring. For a patient with observed time $T_i$ and event indicator $\delta_i$, we set $y_i^*$ as $\tau$ if $T_i \ge \tau$, $T_i$ if $T_i < \tau$ and $\delta_i = 1$, and $\hat{y}(X_i, \tau)$ if  $T_i < \tau$ and $\delta_i = 0$. Finally, we train a model, denoted $f_{\hat{S}}(X_i)$,  to predict $y_i^*$ using an objective function designed to optimize the ranking quality and the prediction error. We utilize a gradient-boosted decision tree as the underlying architecture. Other architectures are possible within this framework as well, such as a deep neural network. 

\paragraph{Loss function}

We train the second model using a hybrid objective function to balance the identification of the top patients with the accuracy of the expected survival times. The total loss $\Loss_{\text{hybrid}}$ is a convex combination of the \textit{mean squared error (MSE)} and a pairwise ranking penalty, motivated by LambdaRank~\citep{Burges10:Ranknet}. We combine the two loss functions using a  hyperparameter $\alpha \in [0, 1]$, $\Loss_{\text{hybrid}} = \alpha \Loss_{\text{MSE}} + (1 - \alpha) \Loss_{\text{rank}}$ and evaluate over a resampled mini-batch (\textit{i.e.}, query group) of patients $B$. The regression component, $\Loss_{\text{MSE}} = \frac{1}{|B|}\sum_i (f_{\hat{S}}(X_i) - y_i^*)^2$, provides stability by ensuring the predicted survival time does not deviate too much from the imputed labels.
The ranking component, $\Loss_{\text{rank}}$, optimizes for NDCG through pairwise losses scaled by the change in NDCG. We consider all pairs $(i, j)$ where patient $i$ outlived patient $j$. That is, $y_i^* > y_j^*$. If the predicted difference $f_{\hat{S}}(X_i) - f_{\hat{S}}(X_j)$ is less than a margin $m$, we apply a penalty that is scaled by $|\Delta \text{NDCG}_{i,j}|$, representing the absolute change in the batch's overall NDCG if patient $i$ and $j$ were to swap ranks, $\Loss_{\text{rank}} = \sum_{i, j \in B \mid y_i^* > y_j^*} \frac{1}{2} \max\left\{0, m - \left(f_{\hat{S}}(X_i) - f_{\hat{S}}(X_j)\right)\right\}^2 \cdot |\Delta \ndcg_{i,j}|$. Scaling by the batch-wide $|\Delta \ndcg_{i,j}|$ ensures the model prioritizes a correct ordering of the top-ranked candidates, effectively maximizing the utility of the resulting allocation policy. We leverage the batch-wide metric rather than $\Delta \ndcgk{k}_{i,j}$ to provide a smoother gradient signal.

\section{Experiments}

We utilize the United Network for Organ Sharing (UNOS) patient registry containing clinical data for adult heart transplants in the US dating back to 1987. We provide a summary of the dataset's characteristics in~\Cref{tab:cohort_summary}. Our learning objective is to estimate \textit{graft survival}, defined as the time elapsed from transplant to organ failure or recipient death. To ensure stability, we restrict to predicting up to $\tau=20$ years, aligning with the 95th percentile of censoring times in the data (\Cref{tab:followup_distribution}). 

An alternative goal to graft survival prediction would be to predict \textit{life-years gained (LYG)} of a transplant, which is the difference in years of life for a patient with and without a transplant. While LYG is often  used for allocation, it relies on the graft survival prediction. For heart allocation, graft survival prediction is also the driving factor of LYG: since conditioned on being waitlisted, a patient's survival without a transplant is typically short, the post-transplant outcome dominates LYG~\citep{Colvin25:OPTN}. 


\subsection{Warm-up: Artificial censoring}

We begin by evaluating our bootstrapping framework and our NDCG estimators under \textit{artificial censoring}. We restrict the initial cohort to only patients with observed events, obtaining a ground-truth dataset where the exact survival time $T_i^*$ is known for every individual. We then introduce artificial censoring to simulate the information loss present in real-world clinical registries. We provide further details on the experimental setup in~\Cref{sec:appendix-experimental-setup}.

\paragraph{Results of bootstrapping}

\Cref{tab:main_results_artificial_censoring} presents the comparative performance between baseline survival predictors and their bootstrapped counterparts, and we show the percent gain in NDCG in~\Cref{fig:ndcg-artificial-increase}. We report ground-truth metrics calculated using the hidden true event times. Across all baseline models, our bootstrapping framework consistently yields substantial improvements in $\ndcgk{k}$.  We see that the standard survival models achieve poor $\ndcgk{k}$, with the values consistently below 0.3 for $k = 1$ regardless of the model. In contrast, our approach improves upon this to over $0.4$ and as high as $0.5$ for $\ndcgk{1}$. The increase for $\ndcgk{1}$ is substantial, with the largest improvement over 0.2 when we bootstrap, and our method consistently yielding at least a $50\%$ increase. 


The gain in NDCG is achieved without compromising other metrics; our framework maintains nearly identical (in fact, marginally superior) C-index and AUC scores compared to the standard models. We observe that the full NDCG is around 0.9 across all models.  This is largely due to the density of survival times in the cohort, where the cumulative sum of relevance is dominated by the mass of long-term survivors, making the metric less sensitive to individual swaps. 

\begin{table}[h]
\centering
\resizebox{\textwidth}{!}{
\begin{tabular}{llcccccccc}
\toprule
\textbf{Predictor} & \textbf{Framework} & \textbf{C-Index} & \textbf{AUC} & \textbf{NDCG@1} & \textbf{NDCG@5} & \textbf{NDCG@10} & \textbf{NDCG@50} & \textbf{NDCG@100} & \textbf{NDCG} \\
\midrule
KM & Standard & .500 $\pm .000$ & .500 $\pm .000$ & .230 $\pm .185$ & .213 $\pm .082$ & .229 $\pm .064$ & .253 $\pm .032$ & .270 $\pm .027$ & .850 $\pm .003$ \\
 & + Ours & \textbf{.636 $\pm .003^{**}$} & \textbf{.754 $\pm .011^{**}$} & \textbf{.437 $\pm .261^{**}$} & \textbf{.413 $\pm .123^{**}$} & \textbf{.424 $\pm .092^{**}$} & \textbf{.427 $\pm .040^{**}$} & \textbf{.446 $\pm .027^{**}$} & \textbf{.902 $\pm .003^{**}$} \\
\hline
Cox & Standard & .623 $\pm .004$ & .763 $\pm .011$ & .288 $\pm .211$ & .343 $\pm .114$ & .360 $\pm .082$ & .411 $\pm .045$ & .424 $\pm .032$ & .898 $\pm .003$ \\
 & + Ours & \textbf{.637 $\pm .003^{**}$} & \textbf{.770 $\pm .010^{**}$} & \textbf{.494 $\pm .256^{**}$} & \textbf{.425 $\pm .117^{**}$} & \textbf{.425 $\pm .095^{**}$} & \textbf{.426 $\pm .045^{**}$} & \textbf{.442 $\pm .034^{**}$} & \textbf{.903 $\pm .003^{**}$} \\
\hline
AFT & Standard & .631 $\pm .003$ & .744 $\pm .009$ & .264 $\pm .228$ & .246 $\pm .107$ & .272 $\pm .093$ & .352 $\pm .045$ & .389 $\pm .032$ & .895 $\pm .003$ \\
 & + Ours & \textbf{.639 $\pm .003^{**}$} & \textbf{.758 $\pm .012^{**}$} & \textbf{.401 $\pm .246^{**}$} & \textbf{.420 $\pm .114^{**}$} & \textbf{.421 $\pm .084^{**}$} & \textbf{.421 $\pm .041^{**}$} & \textbf{.443 $\pm .025^{**}$} & \textbf{.903 $\pm .003^{**}$} \\
\hline
DeepSurv & Standard & .626 $\pm .003$ & .766 $\pm .011$ & .272 $\pm .234$ & .276 $\pm .144$ & .330 $\pm .101$ & .375 $\pm .042$ & .405 $\pm .030$ & .897 $\pm .003$ \\
 & + Ours & \textbf{.637 $\pm .003^{**}$} & \textbf{.769 $\pm .011^{**}$} & \textbf{.406 $\pm .254^{**}$} & \textbf{.400 $\pm .154^{**}$} & \textbf{.403 $\pm .114^{**}$} & \textbf{.423 $\pm .052^{**}$} & \textbf{.439 $\pm .037^{**}$} & \textbf{.903 $\pm .003^{**}$} \\
\hline
DeepHit & Standard & .626 $\pm .003$ & .748 $\pm .012$ & .216 $\pm .220$ & .283 $\pm .128$ & .318 $\pm .107$ & .371 $\pm .055$ & .395 $\pm .034$ & .895 $\pm .004$ \\
 & + Ours & \textbf{.637 $\pm .003^{**}$} & \textbf{.765 $\pm .012^{**}$} & \textbf{.450 $\pm .272^{**}$} & \textbf{.376 $\pm .123^{**}$} & \textbf{.387 $\pm .090^{**}$} & \textbf{.422 $\pm .049^{**}$} & \textbf{.435 $\pm .032^{**}$} & \textbf{.902 $\pm .003^{**}$} \\
\bottomrule
\multicolumn{10}{l}{{\small $^{*} p < .05, ^{**} p < .01$ (Wilcoxon signed-rank test vs. standard model)}} \\
\end{tabular}
}
\caption{Performance of models on ground-truth metrics under artificial censoring.}
\label{tab:main_results_artificial_censoring}
\end{table}

\begin{figure}[h]
    \centering
    \includegraphics[width=0.5\linewidth]{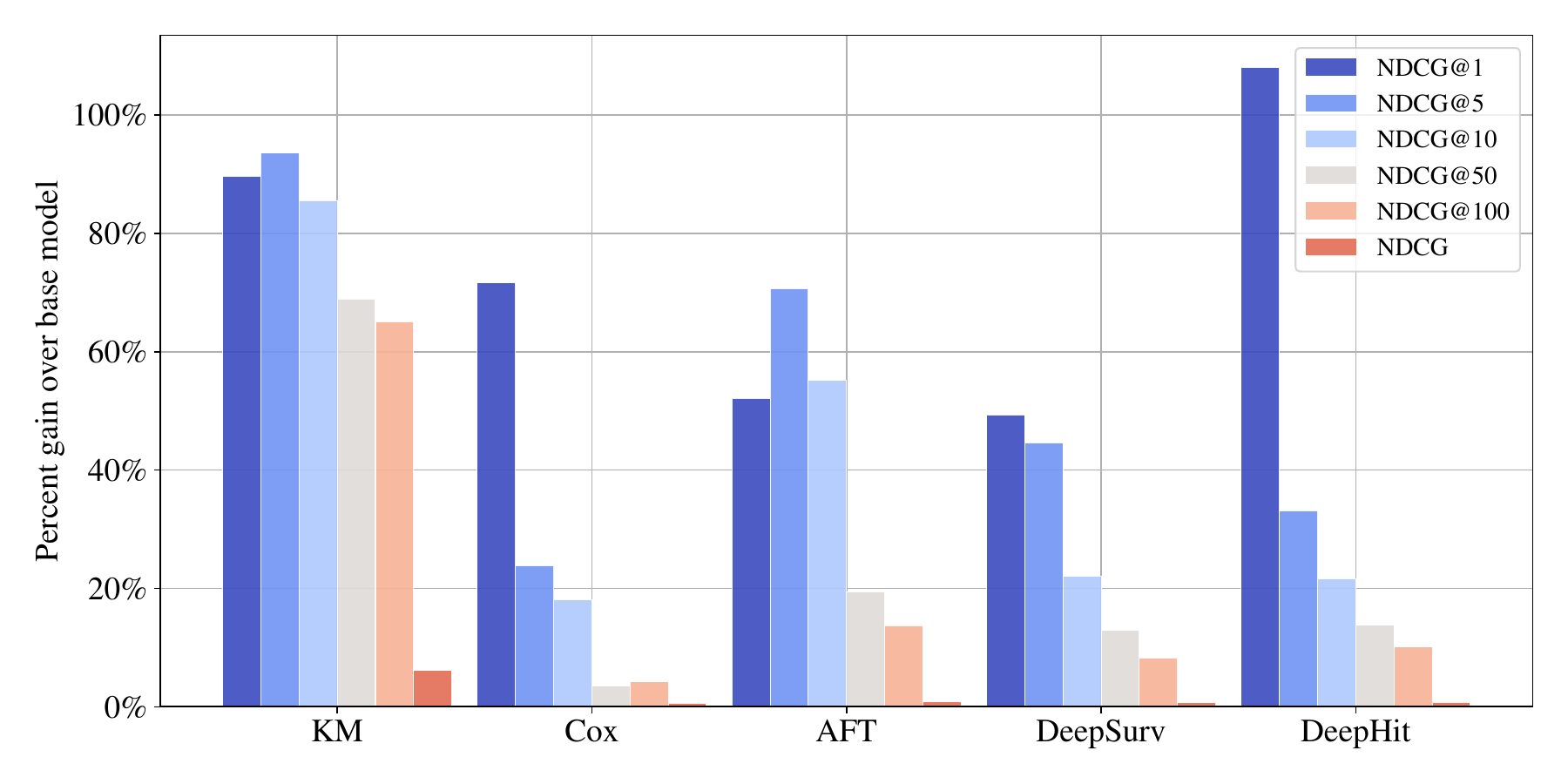}
    \caption{Average $\%$ gain in $\ndcgk{k}$ of bootstrapped model over baseline predictor.}
    \label{fig:ndcg-artificial-increase}
\end{figure}

\paragraph{Accuracy of NDCG estimators}

We evaluate the fidelity of the EY and IPCW estimators for NDCG by comparing their outputs against the ground-truth NDCG available from our artificial censoring. For the EY estimator, we utilize all baseline survival predictors as nuisance models. For the IPCW estimator, we employ the KM and Cox nuisance models. \Cref{fig:mae-heatmap} displays the \textit{mean absolute error (MAE)} for each estimator across different values of $k$. We observe that the estimation error generally decreases as $k$ increases, approaching near-zero error for the full NDCG estimation. This trend is expected, as larger values of $k$ aggregate more points, which averages out individual estimation errors. For $\ndcgk{1}$, the MAE exceeds 0.1. The IPCW estimator has the lowest average error as $k$ increases. In general, we expect some error, particularly due to the bias detailed in~\Cref{sec:ndcg-bias}.

Despite the moderate absolute error, the estimators demonstrate strong correlation when compared to the ground-truth NDCG (\Cref{fig:ndcg-scatter}). The EY estimator, aggregated across all nuisance models, exhibits a high degree of correlation with the true NDCG, achieving a Spearman rank correlation coefficient of $0.8$ for $k\in \{1,5,10, 50\}$.
In contrast, the IPCW estimator displays higher variance; while individual estimates are noisier, the mean of the distribution tracks the ground-truth effectively. We observe some positive bias in the EY estimator likely due to it over-estimating population survival. 

Finally, we evaluate the \textit{relative} accuracy of our estimators. Specifically, their reliability in performing model selection. We report the pairwise concordance: for all pairs of survival models, $(A, B)$, we determine the frequency with which the estimator correctly identifies the superior model. That is, whether $\widehat{\ndcg}_A > \widehat{\ndcg}_B$ given that $\ndcg_A > \ndcg_B$. As shown in~\Cref{tab:estimators_artificial_censoring}, our estimators are consistent at predicting the relative ranking of models. We also evaluate an ensemble estimation, which averages the estimates over all nuisance models. The EY estimator demonstrates superior reliability, correctly identifying the better-performing model in over 80\% of cases, and reaching 90\% accuracy in many instances. While the IPCW estimator is more volatile, it still achieves over 75\% pairwise accuracy for $k=1$. These results establish that both estimators, particularly the EY approach, are robust tools for model selection in the presence of censoring.

\subsection{Evaluating on the full dataset}

Having validated our estimators and our bootstrapping approach under artificial censoring, we now apply our methods to the complete UNOS registry. Unlike the artificial setup, the survival times of censored patients in this dataset are truly unknown. We rely on the NDCG estimators to assess the ranking performance. We summarize the results in~\Cref{tab:full_results} and visually show the percent gain in the NDCG estimates in~\Cref{fig:ndcg-gain}. The results show a consistent performance gain across all baseline predictors when integrated into our bootstrapping framework. The C-index and \textit{AUC at 5 years (AUC@5)} again are very slightly improved for every baseline model tested when using bootstrapping. The AUC@5 measures the model's ability to distinguish between graft failure and survival at five years post-transplant. These results suggest that the architecture we use for the bootstrapped model may be an effective model for this predictive task.

When evaluating the estimated $\ndcgk{k}$, we see the true power of bootstrapping. Across both EY and IPCW estimators, our framework outperforms standard survival models, and by substantial margins at small values of $k$, regardless of the underlying imputer. While the $\ndcgk{1}$ is comparable to the $\ndcgk{5}$ for our approach, the variance of the estimation decreases as $k$ increases. This is one reason why we may examine a larger $k$ when selecting a predictor. The IPCW estimates exhibit higher variance, but they largely trend in the same direction as the EY estimates, confirming that our approach is effective at improving the identification of high-utility candidates. 


\begin{table}[t]
\centering
\resizebox{0.8\textwidth}{!}{
\begin{tabular}{ll cc cc cc}
\toprule
 & & & & \multicolumn{2}{c}{\textbf{EY NDCG@$k$}} & \multicolumn{2}{c}{\textbf{IPCW NDCG@$k$}} \\
\cmidrule(lr){5-6} \cmidrule(lr){7-8}
\textbf{Predictor} & \textbf{Framework} & \textbf{C-Index} & \textbf{AUC@5} & \textbf{1} & \textbf{5} & \textbf{1} & \textbf{5} \\
\midrule
KM & Standard & .500 $\pm .000$ & .500 $\pm .000$ & .647 $\pm .288$ & .610 $\pm .158$ & .430 $\pm .388$ & .350 $\pm .189$ \\
 & + Ours & \textbf{.618 $\pm .004^{**}$} & \textbf{.643 $\pm .005^{**}$} & \textbf{.888 $\pm .184^{**}$} & \textbf{.833 $\pm .108^{**}$} & \textbf{.758 $\pm .523^{**}$} & \textbf{.688 $\pm .284^{**}$} \\
\hline
Cox & Standard & .601 $\pm .004$ & .627 $\pm .005$ & .699 $\pm .271$ & .729 $\pm .100$ & .155 $\pm .414$ & .198 $\pm .202$ \\
 & + Ours & \textbf{.618 $\pm .004^{**}$} & \textbf{.652 $\pm .005^{**}$} & \textbf{.834 $\pm .066^{**}$} & \textbf{.800 $\pm .056^{**}$} & \textbf{.370 $\pm .468^{**}$} & \textbf{.427 $\pm .260^{**}$} \\
\hline
AFT & Standard & .599 $\pm .004$ & .636 $\pm .004$ & .600 $\pm .312$ & .682 $\pm .153$ & .068 $\pm .129$ & .081 $\pm .068$ \\
 & + Ours & \textbf{.617 $\pm .004^{**}$} & \textbf{.652 $\pm .005^{**}$} & \textbf{.810 $\pm .194^{**}$} & \textbf{.778 $\pm .105^{**}$} & \textbf{.128 $\pm .179$} & \textbf{.178 $\pm .130^{**}$} \\
\hline
DeepSurv & Standard & .601 $\pm .004$ & .627 $\pm .005$ & .600 $\pm .333$ & .622 $\pm .142$ & .183 $\pm .302$ & .112 $\pm .113$ \\
 & + Ours & \textbf{.617 $\pm .004^{**}$} & \textbf{.650 $\pm .005^{**}$} & \textbf{.759 $\pm .169$} & \textbf{.756 $\pm .094^{**}$} & \textbf{.269 $\pm .521$} & \textbf{.355 $\pm .334^{**}$} \\
\hline
DeepHit & Standard & .590 $\pm .004$ & .582 $\pm .008$ & .649 $\pm .284$ & .650 $\pm .130$ & .192 $\pm .458$ & .175 $\pm .178$ \\
 & + Ours & \textbf{.613 $\pm .003^{**}$} & \textbf{.620 $\pm .006^{**}$} & \textbf{.730 $\pm .306^{*}$} & \textbf{.769 $\pm .102^{**}$} & \textbf{.601 $\pm .659^{**}$} & \textbf{.487 $\pm .257^{**}$} \\
\bottomrule
\multicolumn{8}{l}{{\small $^{*} p < .05, ^{**} p < .01$ (Wilcoxon signed-rank test vs standard model)}} \\
\end{tabular}
}
\caption{Performance of models on metrics and estimated NDCG (avg  over all nuisance models).}
\label{tab:full_results}
\end{table}

\begin{figure*}[t]
    \centering
    \begin{subfigure}[b]{0.45\textwidth}
        \centering
        \includegraphics[width=\linewidth]{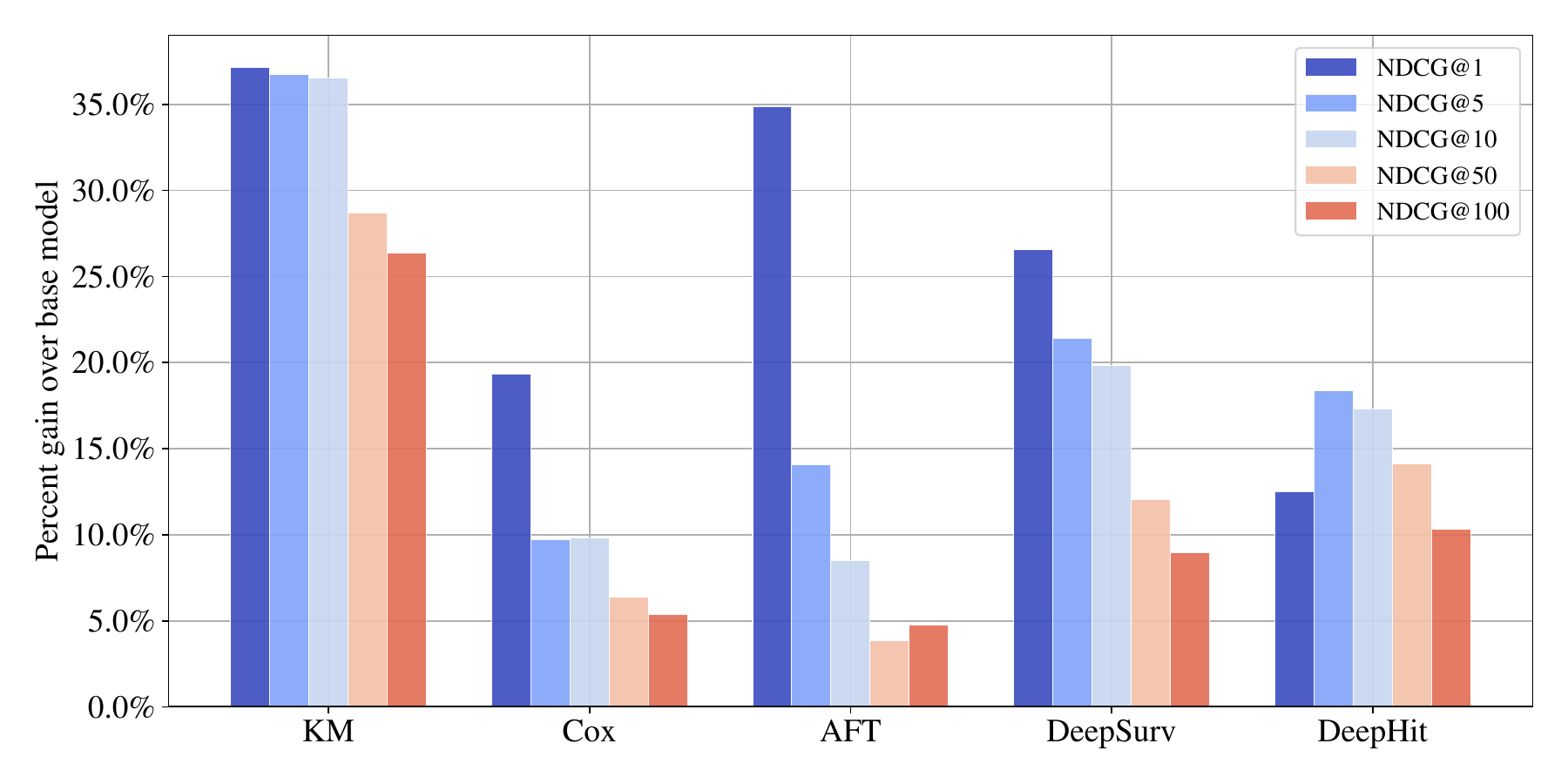}
        \caption{Average $\%$ gain of EY estimation.}
        \label{fig:ndcg-gain-ey}
    \end{subfigure}
    \hfill
    \begin{subfigure}[b]{0.45\textwidth}
        \centering
        \includegraphics[width=\linewidth]{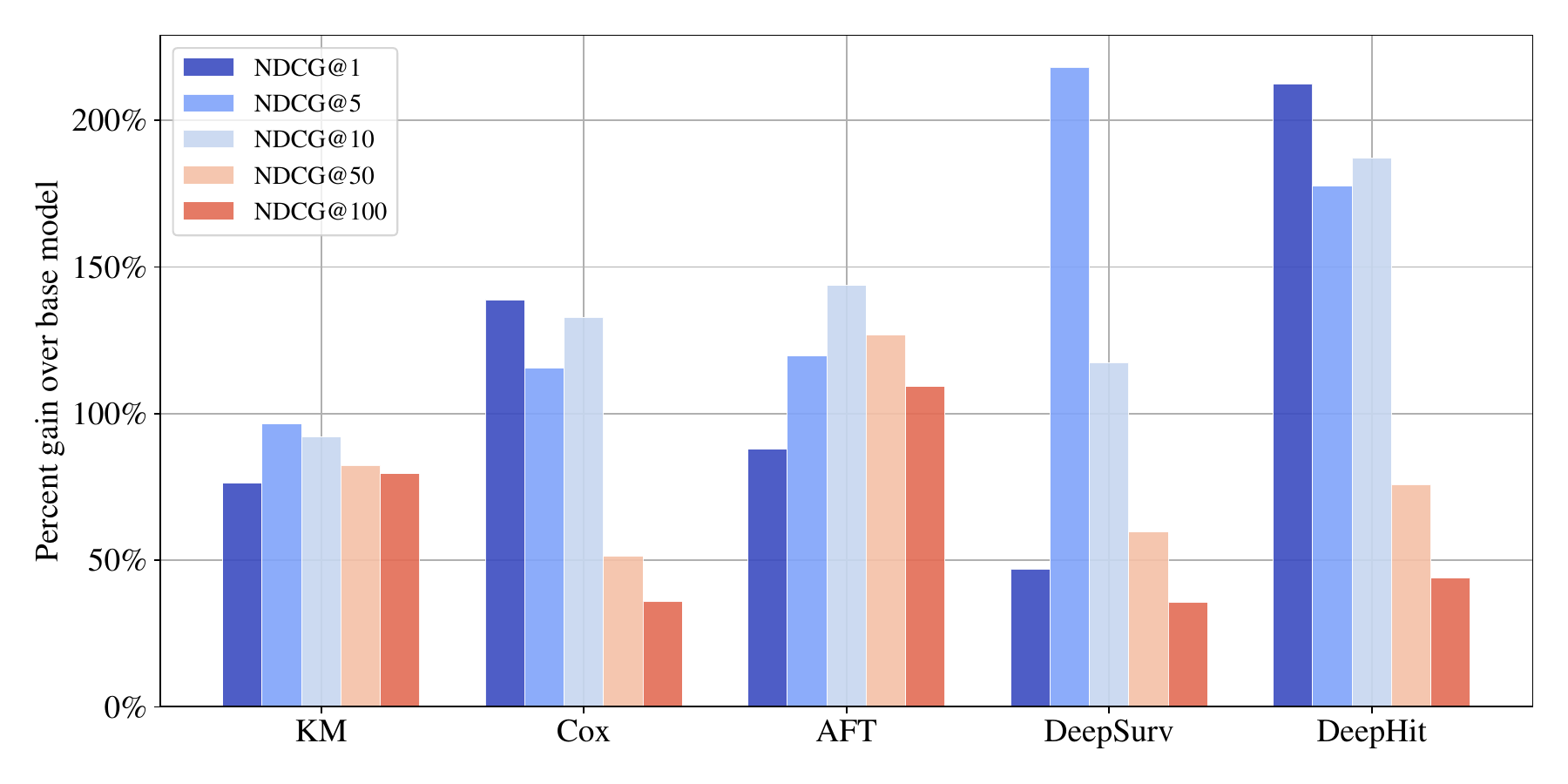}
        \caption{Average $\%$ gain of IPCW estimation.}
        \label{fig:ndcg-gain-ipcw}
    \end{subfigure}
    
    \caption{Average $\%$ gain in $\ndcgk{k}$ estimations of bootstrapped model over baseline predictor.}
    \label{fig:ndcg-gain}
\end{figure*}
\section{Limitations}
\label{sec:limitations}

Relying on real-world historical data introduces inherent limitations. The UNOS registry, like many medical databases, contains erroneous entries and missing data. We preprocess the dataset to impute missing entries and filter inconsistent values.  However, there may be  unobserved confounders absent from the registry which influence outcomes. Systemic biases also present in historical clinical decisions can inadvertently be translated to the predictors. Despite this limitation, our bootstrapping approach is complementary to future advancements in the underlying survival predictors.

The unbiasedness of our DCG estimators relies on strong assumptions regarding the nuisance models. While theoretically necessary, the assumptions are not always achievable in practice.  If the underlying survival model suffers from miscalibration, the EY estimator will inherit this bias. The IPCW estimator similarly requires that the censoring distribution is correctly specified. If censoring is highly informative or the probability of being uncensored approaches zero, the IPCW weights lead to very high-variance estimates. This motivates the use of restricted horizons. However, restricted horizons come at the cost of expressivity. For our application, this restriction is not a major limitation (since the median graft survival is well under 20 years), but it could be a barrier to other applications. Finally, even with unbiased DCG estimators, the resulting NDCG estimate is not unbiased. Our framework reliably determines relative performance and aligns the learning with the allocation, but cannot determine the exact ground-truth NDCG. These limitations underscore the difficulty of predictive tasks under right censorship and promote future work in bias mitigation and model calibration. 
\section{Conclusions}
\label{sec:conclusions}

We addressed the misalignment between predictive components for survival analysis and the requirements of decision making.  We demonstrated that predictors with standard aggregate metrics cannot guarantee any utility when used for allocation. To bridge this gap, we established $\ndcgk{1}$ as a theoretically grounded measure that directly translates to allocation performance.  

In addition, we developed novel estimators of NDCG for right censored datasets and proposed a bootstrapping approach to optimize survival models for NDCG. Our empirical results on real heart transplant data showed the effectiveness of our estimators at determining model superiority and the substantial increase in NDCG when using bootstrapping. By aligning the objectives of survival modeling with allocation optimization, our framework provides a scalable template for improving outcomes in organ transplantation and other domains of decision-making under uncertainty.
\section*{Acknowledgments}

Tuomas Sandholm and his PhD students Ioannis Anagnostides and Itai Zilberstein are supported by NIH award A240108S001, the Vannevar Bush Faculty Fellowship ONR N00014-23-1-2876, and National Science Foundation grant RI-2312342. Itai Zilberstein is also supported by the NSF Graduate Research Fellowship Program under grant DGE2140739. Any opinions, findings, and conclusions or recommendations expressed in this material are those of the author(s) and do not necessarily reflect the views of the funding agencies.

\bibliography{references}

\appendix
\section{Related work}
\label{sec:related}

\paragraph{Decision-focused learning and end-to-end learning}

A closely related line of work is \emph{decision-focused learning (DFL)}. As in our paper, the DFL framework is motivated by the fact that, in machine learning, optimization is typically based on estimators. These two pieces are often treated in isolation by typical approaches~\citep{Wilder19:Melding}. Specifically, a predictive model is first trained through some measure of accuracy, for example, mean squared error. Then that model's predictions are given as input to an optimization algorithm in order to make a decision. While this two-stage approach is justified when the predictive model is highly accurate, in complex tasks state-of-the-art models will inevitably be imperfect. The training process involves tradeoffs as to the nature of such errors. When prediction and optimization are treated in isolation, it creates a critical misalignment.

To address this misalignment, \citet{Donti17:Task} proposed an end-to-end approach for learning ML models in a way that directly captures the final task-based objectives for which the models will be used; this approach is referred to as \emph{end-to-end model learning}. Similarly, decision-focused learning endeavors to align decision and learning~\citep{Wilder19:Melding,Elmachtoub22:Smart,Mandi24:Decision,Capitaine25:Online}. This was pioneered by~\citet{Wilder19:Melding}, who studied certain classes of combinatorial optimization problems, and is also known as integrated decision learning~\citep{Tao26:Necessary}. Closer to our work is the recent paper of~\citet{Capitaine25:Online}, which examines DFL in dynamic environments where the objective and data distribution evolve over time. For a survey on this rapidly growing line of work, we refer to~\citet{Mandi24:Decision}. 

There are two key challenges in applying those prior frameworks in our setting: i) the presence of right censorship, which significantly complicates statistical estimation, and ii) the objective we want to optimize, namely $\ndcgk{1}$, is highly sensitive to distribution shifts.

\paragraph{Algorithms with predictions} 

There is a flourishing line of work on algorithm design through the use of \emph{predictions}~\citep{Mitzenmacher22:Algorithms}, where the goal is to improve performance under reliable estimators and revert back to worst-case performance when the predictors are inaccurate. Heart transplant allocation can also be viewed from that perspective. As motivated above, the interplay between prediction quality and algorithm design is at the core of our approach, driven by the observation that the estimation part needs to be informed by the policy optimization component. These two are typically treated in isolation in prior papers in the line of work on algorithms with predictions.

\paragraph{Calibration} Prediction that aligns with downstream decision tasks can also be accomplished through \emph{calibration}~\citep{Dawid82:Well, Wang23:Calibration,Foster98:Asymptotic}, as it effectively allows treating the estimated quantities as probabilities; exploring how calibration can be used in cases where there is right censorship is an interesting direction for future work. To connect with the line of work on algorithms with predictions, recent work by~\citet{Shen25:Algorithms} studied the ski rental problem under calibrated predictors.

Moreover, the interaction between estimation and downstream decisions is also central in the framework of performative prediction~\citep{Perdomo20:Performative}, where the underlying distribution in the estimation part shifts due to the strategic decisions of the population.

\paragraph{Recommender systems} Another closely related line of research is that of recommender systems~\citep{Burke11:Recommender,Wang13:Theoretical,Rossetti16:Contrasting,Jarvelin02:Cumulated}, which focuses on identifying the optimal items to present to a user. Learning-to-rank~\citep{Cao07:Learning,Burges10:Ranknet} is a foundational technique for training a recommender system, and is the underlying learning objective of our bootstrapping approach. While the connection between learning-to-rank and allocation has been explored in domains such as healthcare resource management~\citep{Kamran24:Learning} and ad auctions~\citep{Richardson07:Predicting}, the formal link between ranking metrics and downstream allocation utility has not been previously established.

\paragraph{Survival prediction and organ allocation}
Survival analysis, a specialized field of time-to-event prediction, focuses on modeling the distribution of event times under conditions of right censorship~\citep{Kaplan58:Nonparametric,Wei92:Accelerated}. Unlike standard regression tasks, the presence of censoring means that for many subjects, we only possess a lower bound on the true event time, rather than an exact observation. Cox regression~\citep{Cox72:Regression} is one of the foundational statistical models for survival analysis. Cox models, which are often evaluated via concordance, are commonly used in organ allocation~\citep{Papalexopoulos24:Reshaping,OPTN25:ContinuousDistribution,Dickerson15:Futurematch}. More recently, deep learning solutions have been developed to extend beyond semi-parametric models for survival analysis~\citep{Lee18:Deephit,Nagpal21:Deep,Katzman18:Deepsurv}.

The application of matching donor organs to patients for transplantation is a high-stakes application of machine learning and matching algorithms~\citep{Su04:Patient,Abraham07:Clearing,Awasthi09:Online,Dickerson15:Futurematch,Dickerson16:Position,Berrevoets20:OrganITE,Berrevoets21:Learning,Anagnostides26:Position,Shojaee21:Adaptively}. Most data-driven approaches rely on \textit{predictions} of transplant outcomes~\citep{Berrevoets20:OrganITE,Berrevoets21:Learning,Dickerson15:Futurematch,Papalexopoulos24:Reshaping,Shojaee21:Adaptively,Zilberstein26:Learning}. These existing methods treat the predictive model as a black-box optimized for aggregate statistical accuracy. While some work has focused on making the predictions more robust~\citep{Berrevoets21:Learning}, the predictive objectives still remain decoupled from the combinatorial requirements of the matching algorithm. Because these predictors are trained in isolation from the downstream matching objective, the resulting allocation mechanisms lack provable bounds on solution quality relative to the true underlying utility.

\paragraph{Conditional average treatment effect estimation} One of the most similar lines of work to ours concerns estimating \textit{conditional average treatment effect (CATE)}, a causal inference problem. Learning-to-rank has also been identified as a core task for CATE when allocating treatments~\citep{Kamran24:Learning,Fernandez22:Causal}. Similar to our bootstrapping approach, re-training a ranker based on an underlying predictive model is a common technique~\citep{Vanderschueren24:Metalearners,Frauen25:Treatment,Arno26:Rank}. Our work differentiates itself from this line of work in multiple ways. First, survival analysis revolves around a different predictive task from the causal treatment effects, which is more akin to predicting the life-years gained. Prior work on CATE has also not addressed right censorship, a distinguishing challenge of survival analysis.  Finally, in general, these prior works focus on ranking an entire set of patients, whereas our allocation is most concerned with predicting the top-$k$ candidates correctly for very small values of $k$ (\textit{i.e.}, $k = 1$).
\setcounter{table}{0}
\renewcommand{\thetable}{A\arabic{table}}

\setcounter{figure}{0}
\renewcommand{\thefigure}{A\arabic{figure}}

\section{Omitted algorithms and proofs}
\label{sec:appendix-proofs}

We begin by proving our claims regarding allocation using a predictor to determine edge weights with only a guarantee on the concordance index of the predictor. 

\deterministiccindex*

\begin{proof}
    Let $N \ge 2$ and consider an arbitrary ranking of the $N$ candidates determined by $\hat{f}$. Let $\pi(i) \in \{1, \dots, N\}$ denote the predicted rank of candidate $i$, where a lower $\pi(i)$ corresponds to a higher utility.

    Given this predicted ranking, the deterministic policy will select a specific candidate. We will adversarially construct the true utilities based on the policy's selection. 

    For a small constant $\epsilon > 0$, we assign the true utilities, $T_i$, as follows.
    \begin{itemize}
        \item For the selected candidate $i$, $T_i = \rho - \epsilon \cdot \pi(i)$
        \item For the $N-1$ non-selected candidates $j$, $T_j = 1 + \epsilon \cdot (N - \pi(j))$
    \end{itemize}
    
    The true utility of the selected candidate is less than the utility of any candidate not selected. The policy achieves a total utility of
    $$ U(\alg) = \rho - \epsilon \cdot \pi(i) < \rho. $$
    Because $N \ge 2$, the optimal policy will select a candidate not selected by the algorithm, achieving a utility of
    $$ U(\opt) = 1 + \epsilon \cdot (N - \pi(j)) > 1. $$
    The ratio of the optimal utility achieved by the policy is therefore upper bounded by $\rho$.

    We now calculate the C-index of the predictor $\hat{f}$. The total number of comparable pairs is $\binom{N}{2}$. All $\binom{N-1}{2}$ pairs excluding candidate $i$ are perfectly ranked. The incorrectly ranked pairs occur when comparing the selected candidate to the rest. There are at most $(N-1)$ such pairs. The C-index is bounded below by
    \[
        1 - \frac{(N-1)}{\binom{N}{2}} = 1 - \frac{2}{N}.
    \]
    As $N \to \infty$, the C-index approaches $1$. We can choose a sufficiently large $N > 2/(1-c)$.  For such an $N$, the predictor achieves C-index at least $c$ while the allocation policy is restricted to a $\rho$ fraction of the optimal utility.
\end{proof}

We continue with the proof of~\Cref{prop:randomized-c-index}.

\randomizedcindex*

\begin{proof}
    Consider any algorithm that assigns a probability $p_i \in [0,1]$ to each candidate $i \in \{1,\dots ,N\}$ based on the utility determined by $\hat{f}$. We assume $\sum_{i=1}^N p_i = 1$. Let $\Delta$ be the total probability mass on the candidate with the highest probability. Let $\pi(i) \in \{1, \dots, N\}$ denote the predicted rank of candidate $i$, where a lower $\pi(i)$ corresponds to a higher utility.

    We assign to the candidate with the highest probability a true, unknown utility of $\rho - \epsilon \cdot \pi(i)$ and a utility of $1 + \epsilon \cdot (N - \pi(i))$ to the remaining $N-1$ candidates for any $\rho \in (0,1)$ and $\epsilon > 0$.  

    As seen in the proof of~\Cref{prop:deterministic-c-index}, if the predictor correctly ranks all the patients except the patient with the largest probability of selection, it achieves a C-index at least $c$ for sufficiently large $N$ as 
    \[
        1 - \frac{(N-1)}{\binom{N}{2}} = 1 - \frac{2}{N}.
    \]

    The expected utility of the algorithm is
    \[ \E[U(\alg)] = \sum_{i=1}^N p_i T_i \le \Delta \cdot \rho + (1 - \Delta) \cdot (1+ \epsilon N) = 1 - \Delta(1 - \rho) + (1-\Delta) \epsilon N. \]
    The optimal offline utility is $U(\opt) \ge 1$. The expected fraction of the optimal utility is then
    \[ 
    \frac{\E[U(\alg)]}{\E[U(\opt)]} \le \frac{1 - \Delta(1 - \rho) + (1-\Delta) \epsilon N}{1} \le 1 - \Delta(1 - \rho) 
    \]
    as $\epsilon$ tends to $0$ for a fixed $N$.
    
    Since $\rho \in (0,1)$, this expected ratio is maximized when $\Delta$ is minimized. The minimum value of the  largest probability is achieved when the distribution is uniform. Any non-uniform policy necessarily has a larger $\Delta$ and can obtain a strictly lower expected ratio of the optimal utility as $N$ grows.
\end{proof}

We move on to the proof of~\Cref{thm:random-ndcgk}, and provide the pseudocode for~\Cref{alg:pwg}. Recall that we assume non-negative weights $(T_i \ge 0~~\forall_i)$ as in our application patients cannot have passed away in the past. 

\begin{algorithm}[h]
\caption{Position-weighted allocation ($\pwg$)}
\label{alg:pwg}
\begin{algorithmic}[1]
\REQUIRE $N$ candidates, predictor $\hat{f}$ with $\ndcgk{k} \ge \alpha$.
\STATE Order the $N$ candidates according to $\hat{f}$.
\STATE Set $w_i = \nicefrac{1}{\log_2(i+1)}$ and $W_k = \sum_{i=1}^k w_i$.
\STATE Randomly allocate to one of the top $k$ ranked candidates proportional to $\nicefrac{w_i}{W_k}$.
\end{algorithmic}
\end{algorithm}

\randomndcgk*

\begin{proof}

    Let $\pi(i) \in \{1, \dots, N\}$ denote the index of the candidate at rank $i$ ordered by $\hat{f}$. We denote the true utility of candidate $i$ as $T_i$ and assume without loss of generality that $T_i \ge T_j$ for all $i \le j$.  $\pwg$ selects rank $i \in [1,k]$ with probability $w_i/W_k$. The expected utility of $\pwg$ is
    \[ \E[\pwg] = \sum_{i=1}^k \frac{w_i}{W_k} T_{\pi(i)} = \frac{\dcgk{k}}{W_k}. \]

    Since $\ndcgk{k} \ge \alpha$, $\dcgk{k} \ge \alpha \idcgk{k}$, so
    \[ 
    \E[\pwg] \ge \frac{\alpha\idcgk{k}}{W_k}.
    \]
    The worst case occurs when $\idcgk{k}$ is minimized. Since $T_1$ is the utility of the best candidate (and also the utility of $\opt$),

    \[ 
    \idcgk{k} = T_1 + \sum_{i=2}^k T_{i}w_i.
    \]
    This quantity is minimized exactly when $T_{i} = 0$ for all $i \in [2,k]$ since the weights are non-negative. This completes the proof as
    \[ 
    \E[\pwg] \ge \frac{\alpha T_1}{W_k} \ge \frac{\alpha}{W_k} U(\opt).
    \]
    
\end{proof}

The proof of~\Cref{cor:greedy-ndcg} follows immediately since $W_1 = 1$. Next, we prove the unbiasedness of our NDCG estimators. 

\unbiasednessEY*

\begin{proof}
We can write the expected value of $\rel^{\ey}_i$ based on the event indicator $\delta_i$,
\begin{equation*}
    \mathbb{E}[\rel^{\ey}_i \mid X_i ] = \E[\delta_i T_i \mid X_i] + \E\left[ (1 - \delta_i) \E_{\hat{S}}[T^*_i \mid T^*_i > T_i, X_i] \mid X_i \right].
\end{equation*}
We consider the two terms and values of $\delta_i$.
\begin{enumerate}
    \item $\delta_i = 1$: The event is fully observed, so $T_i = T^*_i$ which implies $\E[\delta_i T_i \mid X_i] = \E[\delta_i T^*_i \mid X_i]$. 
    \item $\delta_i = 0$: The observation is censored, so $T^*_i > T_i$. Under Assumption \ref{ass:censoring},  $\delta_i = 0$ does not influence the survival time $T^*_i$ other than the survival up to the censoring time. It follows from Assumption \ref{ass:survival-specification} and the Law of Iterated Expectation, that 
    $$
    \E \left[ (1 - \delta_i) \E_{\hat{S}}[T^*_i \mid T^*_i > T_i, X_i] \mid X_i \right] = \mathbb{E}[(1 - \delta_i) T^*_i \mid X_i].
    $$
\end{enumerate}
Combining both cases, 
$$
\E[\delta_i T^*_i \mid X_i] + \E[(1 - \delta_i) T^*_i \mid X_i] = \E\left[ (\delta_i + 1 - \delta_i) T^*_i \mid X_i \right] = \E[T^*_i \mid X_i].
$$ 
\end{proof}

Continuing with the IPCW estimator. 

\unbiasednessIPCW*

\begin{proof}
Note that $\delta_i = \I\{C_i \geq T^*_i\}$ and $T_i = T^*_i$ when $\delta_i = 1$. Since the numerator evaluates to zero when $\delta_i = 0$, we can substitute $T^*_i$ for $T_i$ inside the expectation without changing the expected value. Under Assumption \ref{ass:censoring} and the Law of Iterated Expectation, we get
$$
\E_{T^*, C, X} \left[ \frac{\I\{C_i \geq T^*_i\} T^*_i}{\hat{G}(T^*_i \mid X_i)} ~\Bigg|~ X_i \right] = \E_{T^*, X} \left[ \E_{C} \left[ \frac{\I\{C_i \geq T^*_i\} T^*_i}{\hat{G}(T^*_i \mid X_i)} ~\Bigg|~ T^*_i, X_i \right] ~\Bigg|~ X_i \right].
$$
Since $T^*_i$ is constant relative to the inner expectation over $C$, we factor it out, giving
$$
   \E_{T^*, X} \left[ T^*_i \cdot \E_C \left[ \frac{\I\{C_i \geq T^*_i\}}{\hat{G}(T^*_i \mid X_i)}  ~\Bigg|~ T^*_i, X_i \right] ~\Bigg|~ X_i \right].
$$
By Assumption \ref{ass:censoring-calibration}, the inner expectation equals 1. Therefore,
$\E_{T^*, X} [T^*_i \cdot 1 \mid X_i] = \E[T^*_i \mid X_i]$.

\end{proof}

\section{Further discussion of NDCG estimators}
\label{sec:appendix-estimators}

The restricted EY estimator is $\rel^{\restrict,\ey}_i = \delta^{\restrict}_i T^{\restrict}_i + \left(1 - \delta^{\restrict}_i \right) \E_{\hat{S}} \left[T^{\restrict,*}_i \mid T^*_i > T_i, X_i \right]$.

If we again assume the conditional survival function is conditionally unbiased, we obtain that the restricted estimator provides conditionally unbiased estimates of the relevance. 

\begin{assumption}[Restricted unbiasedness of $\hat{S}$]\label{ass:survival-specification-restricted}
The conditional survival function $\hat{S}$ is conditionally unbiased such that the expected value of the restricted estimated survival times match the true restricted survival time. That is, 
$\E \left[ \E_{\hat{S}}[T^{\restrict,*}_i \mid T^*_i > T_i, X_i] \right] = \E[T^{\restrict,*} \mid T^*_i > T_i, X_i]$.
\end{assumption}

\begin{property}[Unbiasedness of restricted EY estimator]\label{prop:restricted-ey}
Under Assumptions \ref{ass:censoring} and \ref{ass:survival-specification-restricted}, $\rel^{\restrict,\ey}_i$ is a conditionally unbiased estimator of $\E\left[T^{\restrict,*}_i \mid X_i\right]$.
\end{property}

An identical argument to the proof of~\Cref{prop:ey} holds for the proof of~\Cref{prop:restricted-ey}. 

The restricted IPCW estimator, $\rel^{\restrict,\ipcw}_i = \frac{\delta_i^{\restrict} T_i^{\restrict}}{\hat{G}(T_i^{\restrict} \mid X_i)}$, is also unbiased by an identical argument to the unrestricted one. 

\begin{property}[Unbiasedness of restricted IPCW estimator]\label{prop:restricted-ipcw}
Under Assumptions \ref{ass:censoring} and \ref{ass:censoring-calibration}, $\rel^{\restrict,\ipcw}_i$ is a conditionally unbiased estimator of $\E\left[T_i^{\restrict,*} \mid X_i \right]$.
\end{property}

As with the restricted case for the EY estimator, an identical argument holds for the proof of~\Cref{prop:restricted-ipcw}. 

Since $\hat{G}(t \mid X) \to 0$ in the tail, IPCW weights can grow arbitrarily large. Restricting the horizon $\tau$ bounds the maximum weight at $1/\hat{G}(\tau \mid X)$ and is a practical solution for stability. Another advantage of the restricted case of IPCW is that we only require calibration over the horizon $\tau$. In Assumption \ref{ass:censoring-calibration}, we also require that $\hat{G}(t \mid X) > 0$ to avoid division by $0$. In practice, this is not guaranteed in the unrestricted setting, but can be guaranteed in the restricted case. 

IPCW has a number of drawbacks. It assigns a weight of zero to any data point censored before $\tau$, discarding a significant portion of the dataset. In clinical settings with heavy censoring, this reduces the sample size of the evaluation metric. Although the horizon $\tau$ theoretically bounds the maximum weight, patients with a high probability of being censored still receive large inverse weights. This introduces high variance into the evaluation metric, meaning a single heavily weighted patient can entirely dominate the DCG score.

\section{Further details of baseline survival predictors}
\label{sec:appendix-baseline-models}

We present each baseline survival predictor in more detail. 
\begin{itemize}
    \item \textit{Kaplan-Meier estimator (KM)}~\citep{Kaplan58:Nonparametric} is a non-parametric model that predicts the survival function directly from empirical data. It computes the survival probability at time $t$ as the product of conditional survival probabilities at all observed event times $t_i \le t$, defined as
    $$
    \hat{S}(t) = \prod_{t_i \le t} \left( 1 - \frac{d_i}{n_i} \right)
    $$
    where $d_i$ is the number of events occurring at time $t_i$, and $n_i$ is the number of patients at risk prior to time $t_i$. The KM estimator does not use any covariates in its prediction and therefore does not explicitly satisfy the assumptions required to achieve an unbiased estimate of relevance. We still include it as a baseline as it is commonly used in the clinical setting. 
    
    \item \textit{Cox regression (Cox)}~\citep{Cox72:Regression} uses a semi-parametric model relying on the proportional hazards assumption which state that the hazard function can be expressed as 
    
    $$
    h(t \mid x) = h_0(t) \exp(\theta^\top x)
    $$
    where $h_0(t)$ is the baseline hazard function. The baseline hazard is typically fit using the Breslow estimator~\citep{Breslow75:Analysis}. The survival function, $S(t \mid x)$ is computed as
    $$
    S(t \mid x) = \exp\left( -\exp(\theta^\top x)\int_{z=0}^t h_0(z)dz\right).
    $$
    \item \textit{Accelerated Failure Time (AFT)}~\citep{Wei92:Accelerated} is a fully parametric model that assumes the effect of covariates is to accelerate (or decelerate) the time to an event. It models the logarithm of the survival time $T$ as a linear function of the covariates $x$,
    $$
    \log(T) = \theta^\top x + \sigma\epsilon
    $$
    where $\theta$ is the vector of coefficients, $\sigma$ is a scale parameter, and $\epsilon$ is an error term following an underlying assumed distribution (typically the Weibull distribution). The survival function $S(t \mid x)$ is related to the baseline survival function $S_0(t)$ by
    $$
    S(t \mid x) = S_0(t \exp(-\theta^\top x)).
    $$
    \item \textit{DeepSurv}~\citep{Katzman18:Deepsurv} is a non-linear  extension of Cox regression that uses deep neural networks to model the proportional hazards. DeepSurv replaces the linear combination of covariates with the scalar output of a deep neural network, $f_\theta(x)$. The hazard function is modeled as
    $$
    h(t \mid x) = h_0(t) \exp(f_\theta(x)).
    $$
   
    \item \textit{DeepHit}~\citep{Lee18:Deephit} is a discrete-time survival model that relaxes the proportional hazards assumption. DeepHit discretizes the time horizon into intervals and uses a deep neural network to estimate the probability of the event occurring within each interval. 
\end{itemize}

\section{Model and training configurations}
\label{sec:appendix-configs}

We detail the hyperparameters and architectural configurations for all models used in our experiments. All models are implemented in Python 3.11, leveraging common implementations from the \texttt{scikit-survival}, \texttt{lifelines}, and \texttt{pycox} libraries. We preprocess the dataset in the same way for all models. We impute missing features using median imputation, scale features using a standard scaler, and leverage a one-hot encoding of categorical features. 
All experiments are conducted on an M4 Pro processor with 24GB unified memory, and terminate within hours on this processor. We report the final hyperparameters used in experiments following tuning.

\subsection{Our bootstrapping approach}

Our approach utilizes a \textit{Gradient Boosted Decision Tree (GBDT)} implemented via the LightGBM library, optimized with a custom objective detailed in~\Cref{sec:bootstrap-model}. We present the necessary hyperparameters for reproducibility in~\Cref{tab:a-bootstrapping-hyperparameters}. Note training also uses early stopping using a validation set. 

\begin{table}[h!]
\centering
\begin{tabular}{ll}
\toprule
\textbf{Hyperparameter} & \textbf{Value} \\
\midrule
$\alpha$ & 0.3 \\
Margin $m$ & 1.0 \\
Batch size & 256 \\
Learning rate & 0.05 \\
Number of estimators & 100 \\
Number of leaves & 31 \\
Min. child samples & 40 \\
Subsample ratio & 0.8 \\
Colsample by tree & 0.8 \\
Early stopping & 50 rounds \\
\bottomrule
\end{tabular}
\caption{Bootstrapped model hyperparameters.}
\label{tab:a-bootstrapping-hyperparameters}
\end{table}

\subsection{Deep neural networks}
DeepSurv and DeepHit are implemented using the \texttt{pycox} library configured using the values in~\Cref{tab:a-nn-hyperparameters}. Training also uses early stopping based on the performance on the validation set. 

\begin{table}[h!]
\centering
\begin{tabular}{lll}
\toprule
\textbf{Hyperparameter} & \textbf{DeepSurv} & \textbf{DeepHit} \\
\midrule
Hidden layers & [64, 32] & [64, 32] \\
Activation & ReLU & ReLU \\
Dropout & 0.4 & 0.5 \\
Batch normalization & True & True \\
Optimizer & Adam & Adam \\
Learning rate & 1e-3 & 1e-3 \\
Batch size & 256 & 256 \\
Training Epochs & 75 & 75 \\
Specific Params & Weight decay: 1e-4 & Bins: 30, $\alpha$: 0.3, $\sigma$: 0.3 \\
\bottomrule
\end{tabular}
\caption{DeepSurv and DeepHit configurations.}
\label{tab:a-nn-hyperparameters}
\end{table}

\subsection{Statistical baselines}
The statistical baselines are implemented from standard libraries with $L_2$ regularization to prevent overfitting shown in~\Cref{tab:stat-hyperparameters}. 

\begin{table}[h]
\centering
\begin{tabular}{lll}
\toprule
\textbf{Model} & \textbf{Library} & \textbf{Regularization/Penalizer} \\
\midrule
Cox & \texttt{scikit-survival} & $\alpha=0.1$ \\
AFT & \texttt{lifelines} & penalizer$=0.1$ \\
KM & \texttt{lifelines} & N/A (Non-parametric) \\
\bottomrule
\end{tabular}
\caption{Statistical model parameters.}
\label{tab:stat-hyperparameters}
\end{table}

\section{Further details of the experimental setup}
\label{sec:appendix-experimental-setup}

In the artificial censoring experiment, for patients selected for censoring, we draw a censoring time $C_i \in [0, T_i^*]$ uniformly at random. We conduct 5 iterations of 5-fold cross-validation. In each iteration, the dataset is independently re-censored. We cross-fit the nuisance models within each test fold to avoid overfitting to the observed outcomes and to ensure that the nuisance estimates used for evaluation are generated independently of the data used to train the predictors.

For the full experiment on the UNOS registry, we again conduct 5 iterations of 5-fold cross-validation using the same cross-fitting setup.

\section{Omitted tables and figures}

\begin{table}[h!]
    \centering
    \small
    \begin{tabular}{cccc}
        \toprule
        \textbf{Total Patients ($N$)} & \textbf{Censored (\%)} & \textbf{Total Features} & \textbf{Categorical Features} \\
        \midrule
        60,055 & 29,707 (49.47\%) & 92 & 46 \\
        \bottomrule
    \end{tabular}
    \caption{Summary of UNOS dataset characteristics and feature composition.}
    \label{tab:cohort_summary}
\end{table}

\begin{table}[h!]
    \centering
    \begin{tabular}{lccc}
        \toprule
        \textbf{Metric (Years)} & \textbf{Entire Cohort} & \textbf{Observed} & \textbf{Censored} \\
        \midrule
        Median & 5.60  & 5.71  & 5.25  \\
        75th Percentile & 10.83 & 11.46 & 10.01 \\
        90th Percentile & 16.32 & 17.01 & 15.82 \\
        95th Percentile & 20.01 & 20.37 & 19.64 \\
        99th Percentile & 26.76 & 26.42 & 27.01 \\
        Maximum & 36.02 & 35.25 & 36.02 \\
        \bottomrule
    \end{tabular}
    \caption{Distribution of follow-up times in years across the overall cohort, observed events, and censored patients.}
    \label{tab:followup_distribution}
\end{table}

\begin{figure}[h!]
    \centering
    \includegraphics[width=0.5\linewidth]{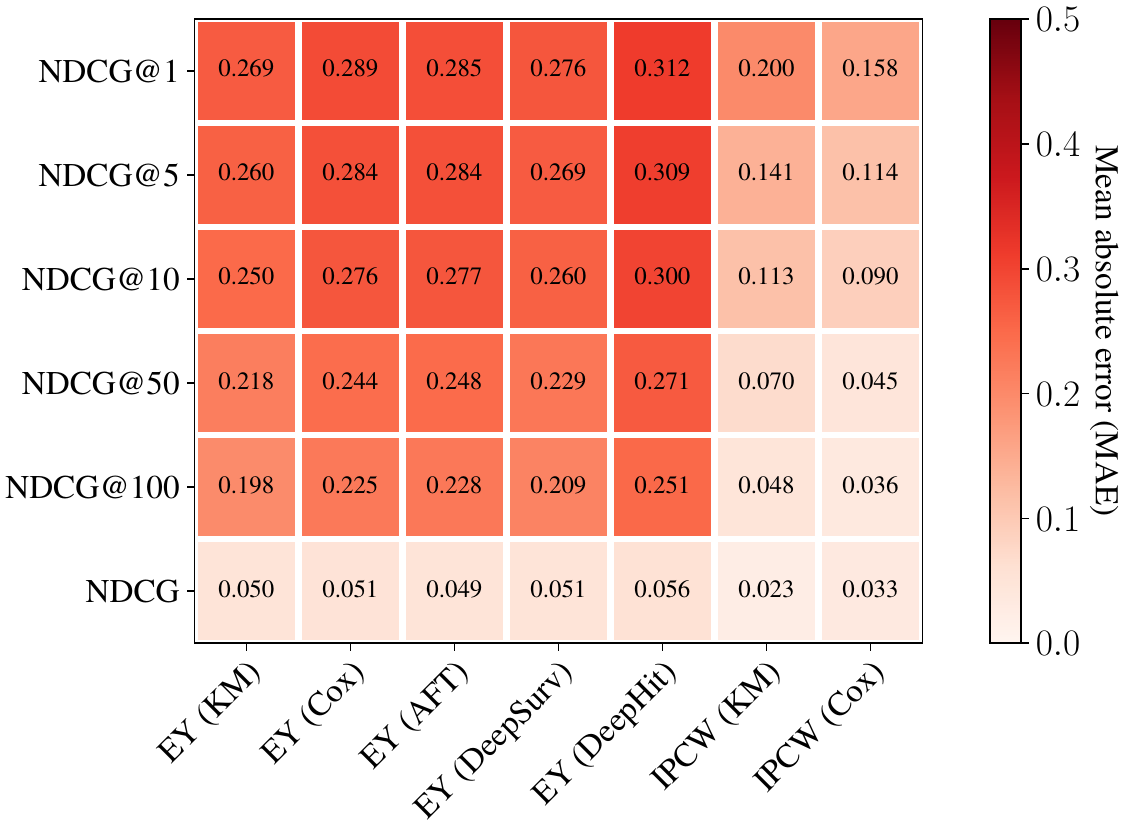}
    \caption{Mean absolute error in NDCG estimation.}
    \label{fig:mae-heatmap}
\end{figure}

\begin{figure*}[h!]
    \centering
    \begin{subfigure}[b]{0.32\textwidth}
        \centering
        \includegraphics[width=\linewidth]{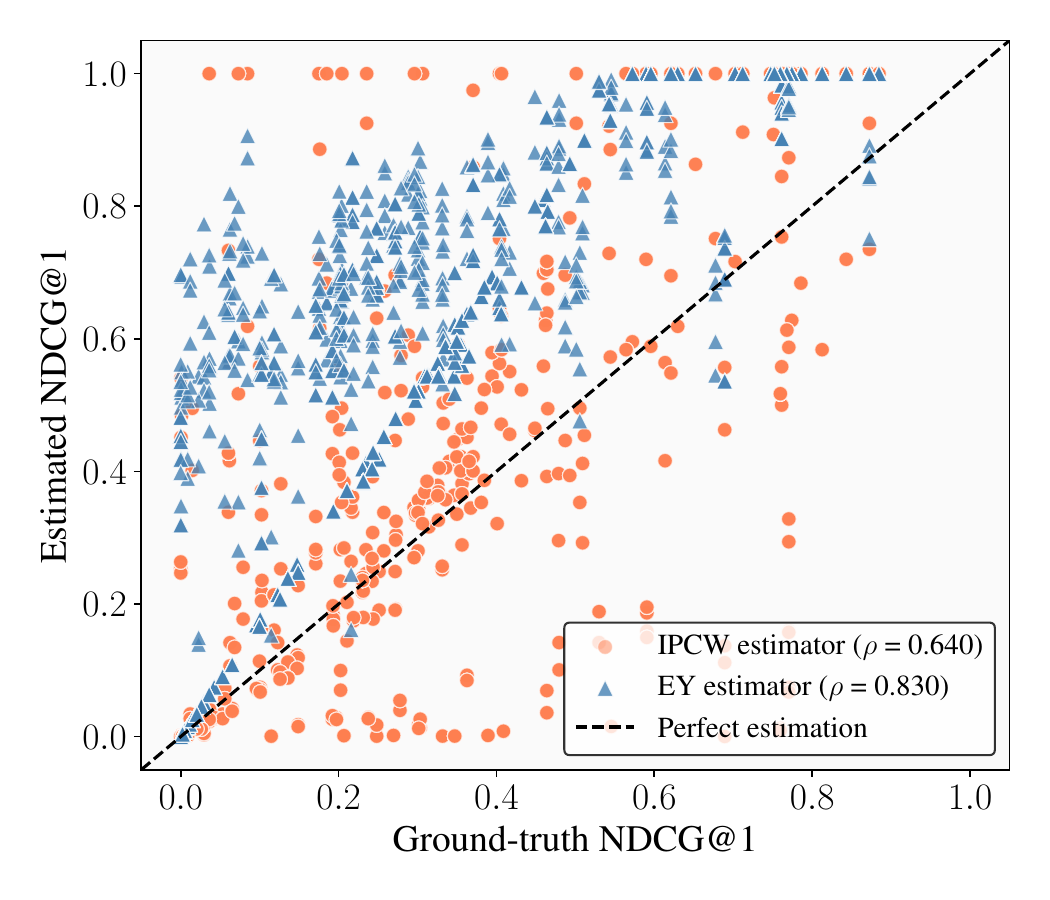}
        \caption{Estimation of NDCG@1.}
        \label{fig:ndcg@1-est}
    \end{subfigure}
    \hfill
    \begin{subfigure}[b]{0.32\textwidth}
        \centering
        \includegraphics[width=\linewidth]{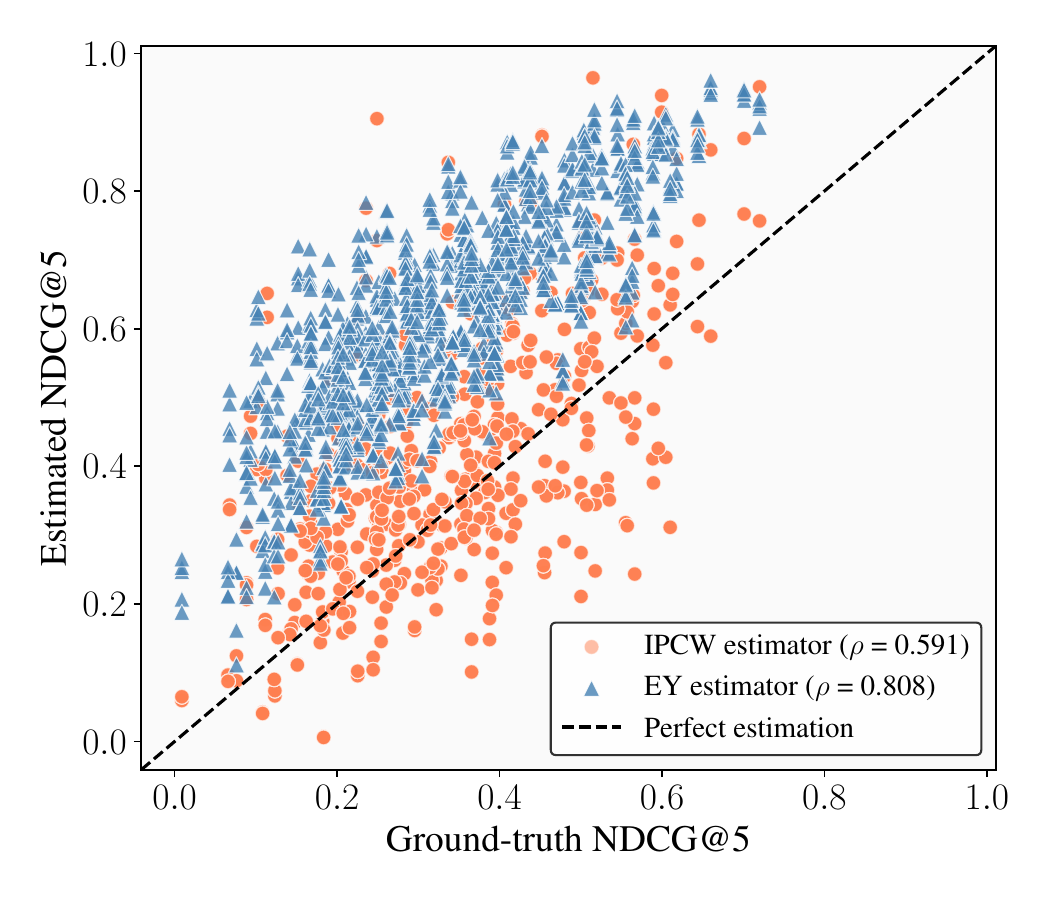}
        \caption{Estimation of NDCG@5.}
        \label{fig:ndcg@5-est}
    \end{subfigure}
    \hfill
    \begin{subfigure}[b]{0.32\textwidth}
        \centering
        \includegraphics[width=\linewidth]{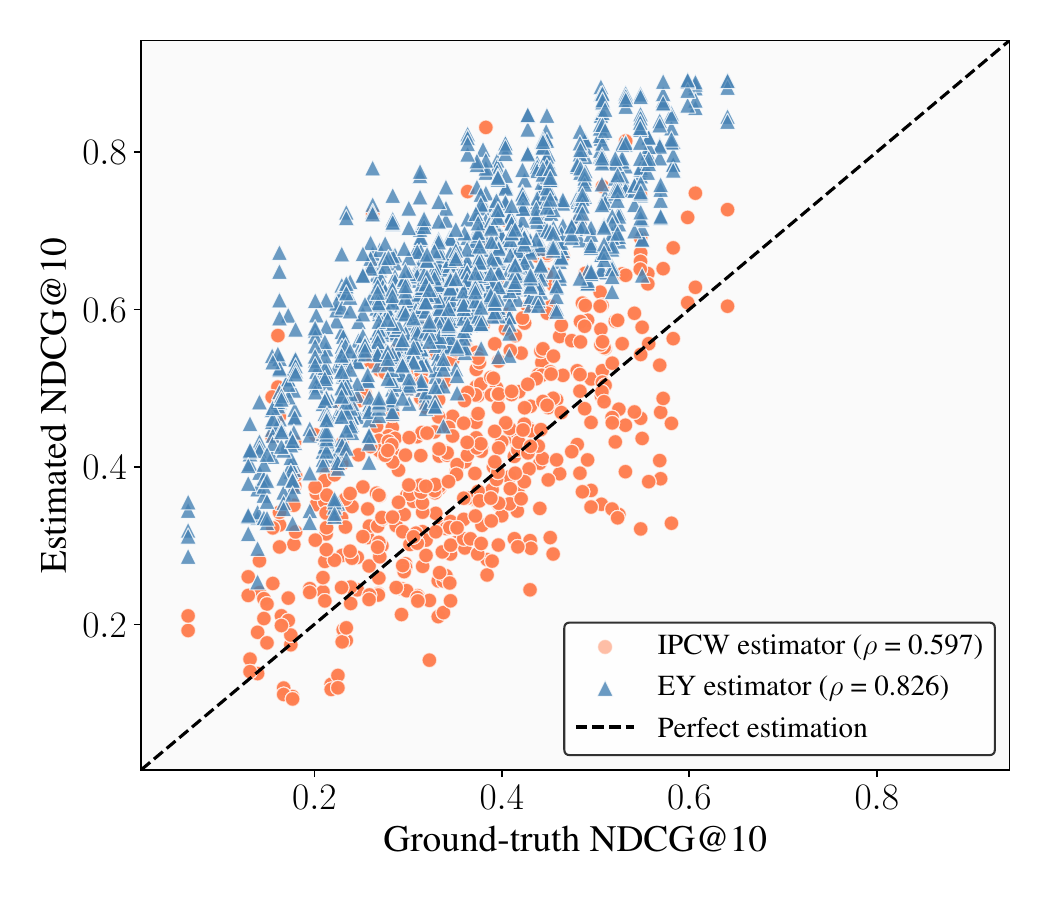}
        \caption{Estimation of NDCG@10.}
        \label{fig:ndcg@10-est}
    \end{subfigure}
    \\
    \begin{subfigure}[b]{0.32\textwidth}
        \centering
        \includegraphics[width=\linewidth]{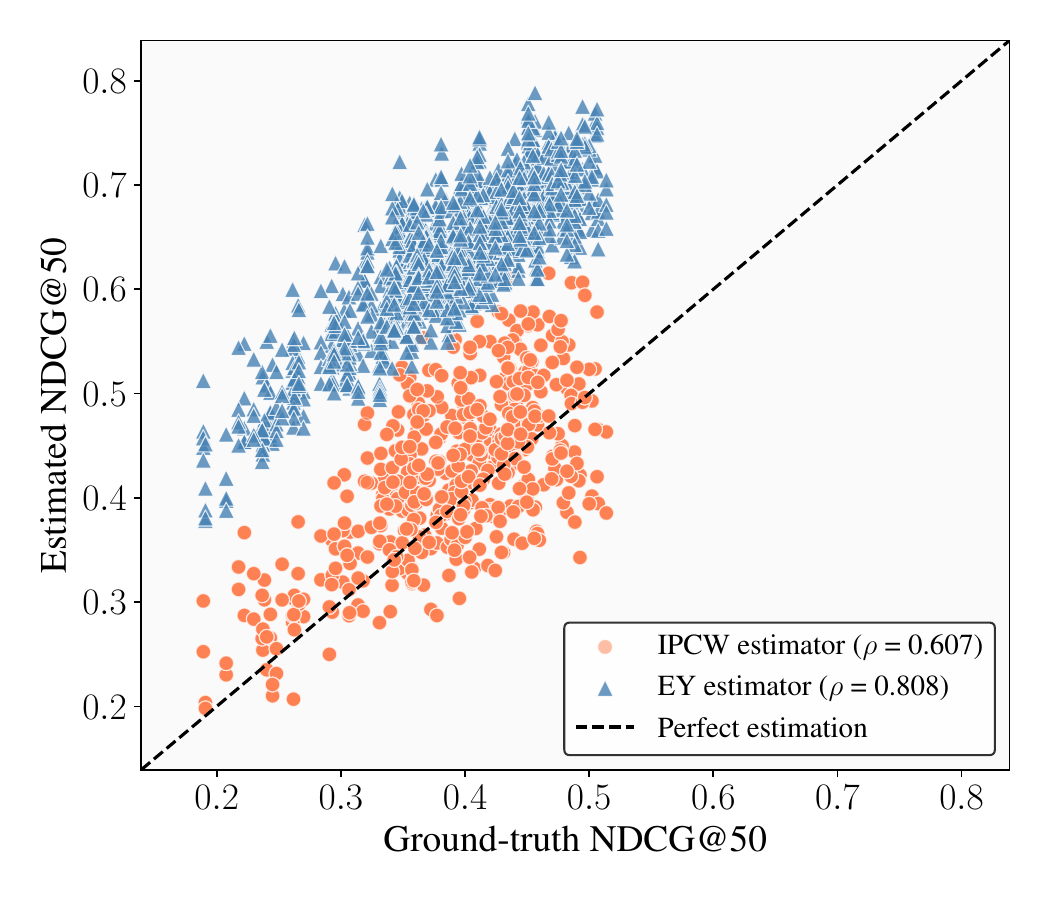}
        \caption{Estimation of NDCG@50.}
        \label{fig:ndcg@50-est}
    \end{subfigure}
    \hfill
    \begin{subfigure}[b]{0.32\textwidth}
        \centering
        \includegraphics[width=\linewidth]{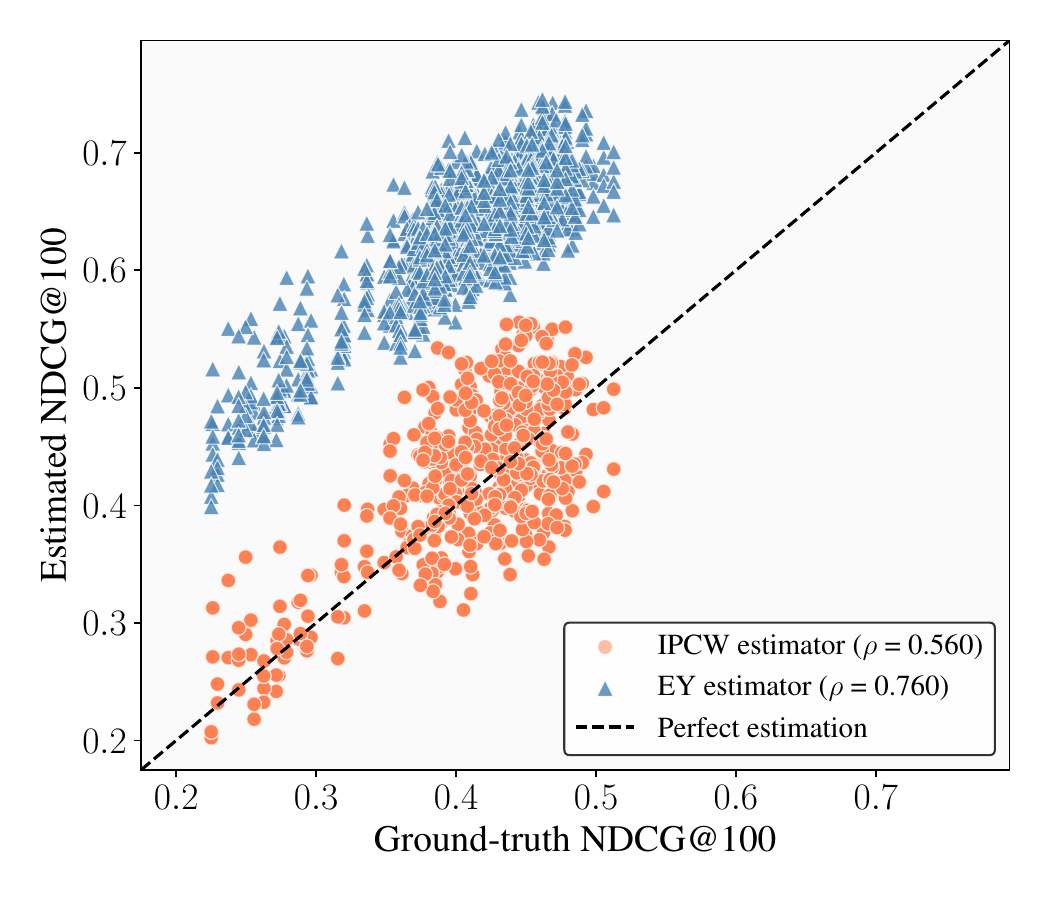}
        \caption{Estimation of NDCG@100.}
        \label{fig:ndcg@100-est}
    \end{subfigure}
    \hfill
    \begin{subfigure}[b]{0.32\textwidth}
        \centering
        \includegraphics[width=\linewidth]{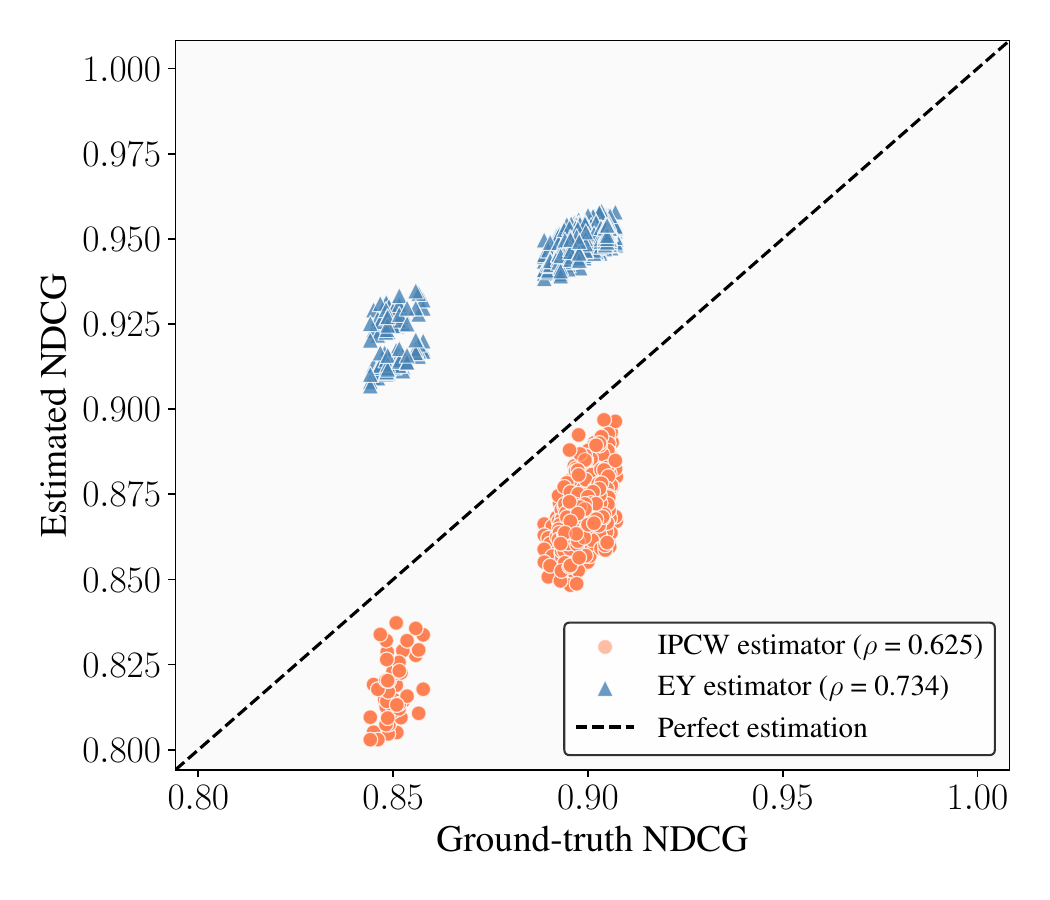}
        \caption{Estimation of NDCG.}
        \label{fig:ndcg-est}
    \end{subfigure}
    
    \caption{Ground-truth versus estimated NDCG@$k$. Each data point is an individual estimate for a model and estimator. We report the Spearman rank correlation coefficient, $\rho$, for each estimator.}
    \label{fig:ndcg-scatter}
\end{figure*}

\begin{table}[h!]
\centering
\small
\resizebox{\textwidth}{!}{
\begin{tabular}{llcccccc}
\toprule
\textbf{Method} & \textbf{Nuisance model} & NDCG@1 & NDCG@5 & NDCG@10 & NDCG@50 & NDCG@100 & NDCG \\
\midrule
IPCW & KM & 77.8\% & 83.1\% & 78.7\% & 81.3\% & 85.8\% & 91.1\% \\
IPCW & Cox & 80.9\% & 73.8\% & 76.9\% & 72.9\% & 74.7\% & 83.1\% \\
IPCW & Ensemble (Avg) & 80.9\% & 79.6\% & 77.8\% & 79.1\% & 80.0\% & 87.1\% \\
\midrule
EY & KM & 83.6\% & 83.2\% & 84.5\% & 85.5\% & 84.6\% & 86.1\% \\
EY & Cox & 82.3\% & 81.1\% & 81.7\% & 83.4\% & 81.7\% & 83.7\% \\
EY & AFT & 83.2\% & 82.6\% & 83.5\% & 84.6\% & 83.7\% & 85.7\% \\
EY & DeepSurv & 83.1\% & 82.2\% & 82.8\% & 84.1\% & 83.2\% & 85.4\% \\
EY & DeepHit & 81.7\% & 79.3\% & 80.7\% & 82.5\% & 81.5\% & 82.3\% \\
EY & Ensemble (Avg) & 80.9\% & 83.1\% & 89.3\% & 91.1\% & 92.9\% & 91.1\% \\
\bottomrule
\end{tabular}
}
\caption{Model pairwise accuracy by estimator.}
\label{tab:estimators_artificial_censoring}
\end{table}

\begin{table}[h!]
\centering
\resizebox{\textwidth}{!}{
\begin{tabular}{ll ccccc}
\toprule
\textbf{Predictor} & \textbf{Framework} & \multicolumn{5}{c}{\textbf{EY NDCG@$k$}} \\
\cmidrule(lr){3-7}
 & & \textbf{1} & \textbf{5} & \textbf{10} & \textbf{50} & \textbf{100} \\
\midrule
KM & Standard & 0.647 $\pm 0.288$ & 0.610 $\pm 0.158$ & 0.604 $\pm 0.094$ & 0.614 $\pm 0.040$ & 0.612 $\pm 0.027$ \\
 & + Ours & \textbf{0.888 $\pm 0.184^{**}$} & \textbf{0.833 $\pm 0.108^{**}$} & \textbf{0.824 $\pm 0.075^{**}$} & \textbf{0.790 $\pm 0.036^{**}$} & \textbf{0.773 $\pm 0.029^{**}$} \\
\hline
Cox & Standard & 0.699 $\pm 0.271$ & 0.729 $\pm 0.100$ & 0.729 $\pm 0.065$ & 0.729 $\pm 0.036$ & 0.730 $\pm 0.022$ \\
 & + Ours & \textbf{0.834 $\pm 0.066^{**}$} & \textbf{0.800 $\pm 0.056^{**}$} & \textbf{0.801 $\pm 0.045^{**}$} & \textbf{0.776 $\pm 0.033^{**}$} & \textbf{0.769 $\pm 0.024^{**}$} \\
\hline
AFT & Standard & 0.600 $\pm 0.312$ & 0.682 $\pm 0.153$ & 0.707 $\pm 0.105$ & 0.732 $\pm 0.042$ & 0.718 $\pm 0.028$ \\
 & + Ours & \textbf{0.810 $\pm 0.194^{**}$} & \textbf{0.778 $\pm 0.105^{**}$} & \textbf{0.767 $\pm 0.081^{**}$} & \textbf{0.760 $\pm 0.032^{**}$} & \textbf{0.752 $\pm 0.027^{**}$} \\
\hline
DeepSurv & Standard & 0.600 $\pm 0.333$ & 0.622 $\pm 0.142$ & 0.631 $\pm 0.106$ & 0.676 $\pm 0.047$ & 0.690 $\pm 0.034$ \\
 & + Ours & \textbf{0.759 $\pm 0.169$} & \textbf{0.756 $\pm 0.094^{**}$} & \textbf{0.757 $\pm 0.065^{**}$} & \textbf{0.757 $\pm 0.039^{**}$} & \textbf{0.752 $\pm 0.032^{**}$} \\
\hline
DeepHit & Standard & 0.649 $\pm 0.284$ & 0.650 $\pm 0.130$ & 0.647 $\pm 0.088$ & 0.669 $\pm 0.043$ & 0.685 $\pm 0.039$ \\
 & + Ours & \textbf{0.730 $\pm 0.306^{*}$} & \textbf{0.769 $\pm 0.102^{**}$} & \textbf{0.759 $\pm 0.093^{**}$} & \textbf{0.763 $\pm 0.046^{**}$} & \textbf{0.755 $\pm 0.035^{**}$} \\
\bottomrule
\multicolumn{7}{l}{{\small $^{*} p < 0.05, ^{**} p < 0.01$ (Wilcoxon signed-rank test vs standard model)}} \\
\end{tabular}}
\caption{Performance of models from the EY NDCG estimator. We report the average over all nuisance models for each estimator.}
\end{table}

\begin{table}[h!]
\centering
\resizebox{\textwidth}{!}{
\begin{tabular}{ll ccccc}
\toprule
\textbf{Predictor} & \textbf{Framework} & \multicolumn{5}{c}{\textbf{IPCW NDCG@$k$}} \\
\cmidrule(lr){3-7}
 & & \textbf{1} & \textbf{5} & \textbf{10} & \textbf{50} & \textbf{100} \\
\midrule
KM & Standard & 0.430 $\pm 0.388$ & 0.350 $\pm 0.189$ & 0.322 $\pm 0.138$ & 0.296 $\pm 0.080$ & 0.287 $\pm 0.064$ \\
 & + Ours & \textbf{0.758 $\pm 0.523^{**}$} & \textbf{0.688 $\pm 0.284^{**}$} & \textbf{0.619 $\pm 0.221^{**}$} & \textbf{0.539 $\pm 0.111^{**}$} & \textbf{0.516 $\pm 0.078^{**}$} \\
\hline
Cox & Standard & 0.155 $\pm 0.414$ & 0.198 $\pm 0.202$ & 0.177 $\pm 0.131$ & 0.213 $\pm 0.070$ & 0.244 $\pm 0.056$ \\
 & + Ours & \textbf{0.370 $\pm 0.468^{**}$} & \textbf{0.427 $\pm 0.260^{**}$} & \textbf{0.412 $\pm 0.212^{**}$} & \textbf{0.323 $\pm 0.089^{**}$} & \textbf{0.332 $\pm 0.074^{**}$} \\
\hline
AFT & Standard & 0.068 $\pm 0.129$ & 0.081 $\pm 0.068$ & 0.087 $\pm 0.052$ & 0.094 $\pm 0.033$ & 0.107 $\pm 0.032$ \\
 & + Ours & \textbf{0.128 $\pm 0.179$} & \textbf{0.178 $\pm 0.130^{**}$} & \textbf{0.213 $\pm 0.113^{**}$} & \textbf{0.212 $\pm 0.062^{**}$} & \textbf{0.223 $\pm 0.048^{**}$} \\
\hline
DeepSurv & Standard & 0.183 $\pm 0.302$ & 0.112 $\pm 0.113$ & 0.148 $\pm 0.099$ & 0.183 $\pm 0.062$ & 0.223 $\pm 0.057$ \\
 & + Ours & \textbf{0.269 $\pm 0.521$} & \textbf{0.355 $\pm 0.334^{**}$} & \textbf{0.322 $\pm 0.244^{**}$} & \textbf{0.292 $\pm 0.124^{**}$} & \textbf{0.302 $\pm 0.088^{**}$} \\
\hline
DeepHit & Standard & 0.192 $\pm 0.458$ & 0.175 $\pm 0.178$ & 0.153 $\pm 0.120$ & 0.239 $\pm 0.071$ & 0.292 $\pm 0.064$ \\
 & + Ours & \textbf{0.601 $\pm 0.659^{**}$} & \textbf{0.487 $\pm 0.257^{**}$} & \textbf{0.439 $\pm 0.205^{**}$} & \textbf{0.421 $\pm 0.110^{**}$} & \textbf{0.421 $\pm 0.079^{**}$} \\
\bottomrule
\multicolumn{7}{l}{{\small $^{*} p < 0.05, ^{**} p < 0.01$ (Wilcoxon signed-rank test vs standard model)}} \\
\end{tabular}}
\caption{Performance of models from the IPCW NDCG estimator. We report the average over all nuisance models for each estimator.}
\end{table}

\end{document}